\documentclass[11pt]{article}
\usepackage[marginratio=1:1,height=600pt,width=460pt,tmargin=100pt]{geometry}

\usepackage{authblk}

\usepackage{hyperref}
\usepackage{url}

\usepackage[utf8]{inputenc} %
\usepackage[T1]{fontenc}    %
\usepackage{hyperref}       %
\usepackage{url}            %
\usepackage{booktabs}       %
\usepackage{amsfonts}       %
\usepackage{nicefrac}       %
\usepackage{microtype}      %
\usepackage{xcolor}         %

\usepackage{url}
\usepackage{multicol}
\usepackage{microtype}
\usepackage{graphicx}
\usepackage{tablefootnote}
\usepackage{array}
\usepackage{subcaption}
\usepackage{booktabs} 
\usepackage{cprotect}
\usepackage{wrapfig}
\usepackage{fancyvrb}
\usepackage{soul}
\usepackage{indentfirst}
\usepackage{multirow}
\usepackage{enumitem}
\usepackage{mathtools}
\usepackage{amsmath, amsthm, amssymb}
\usepackage{mathtools}
\usepackage{times}
\usepackage{diagbox}
\usepackage{xcolor} 
\usepackage{color}
\usepackage{mwe}
\usepackage{algorithmic}
\usepackage{algorithm}
\usepackage{bbm}
\usepackage{natbib}
\usepackage{forloop}
\usepackage{url}
\usepackage{multicol}
\usepackage{microtype}
\usepackage{amsthm}
\usepackage{array}
\usepackage{booktabs} %
\usepackage{makecell}
\usepackage{indentfirst}
\usepackage{amsfonts}
\usepackage{physics}
\usepackage{mathdots}
\usepackage{yhmath}
\usepackage{cancel}
\usepackage{hyperref}
\usepackage{tabularx}
\usepackage{extarrows}
\usepackage{lastpage}
\usepackage{bm}

\usepackage[utf8]{inputenc} %
\usepackage[T1]{fontenc}    %
\usepackage{hyperref}       %
\usepackage{url}            %
\usepackage{booktabs}       %
\usepackage{amsfonts}       %
\usepackage{nicefrac}       %
\usepackage{microtype}      %
\usepackage{xcolor}         %

\usepackage{url}
\usepackage{multicol}
\usepackage{microtype}
\usepackage{graphicx}
\usepackage{tablefootnote}
\usepackage{array}
\usepackage{subcaption}
\usepackage{booktabs} 
\usepackage{cprotect}
\usepackage{wrapfig}
\usepackage{fancyvrb}
\usepackage{soul}
\usepackage{indentfirst}
\usepackage{multirow}
\usepackage{enumitem}
\usepackage{mathtools}
\usepackage{amsmath, amsthm, amssymb}
\usepackage{mathtools}
\usepackage{times}
\usepackage{diagbox}
\usepackage{xcolor} 
\usepackage{color}
\usepackage{mwe}
\usepackage{algorithmic}
\usepackage{algorithm}
\usepackage{bbm}
\usepackage{natbib}
\usepackage{forloop}
\usepackage{url}
\usepackage{multicol}
\usepackage{microtype}
\usepackage{amsthm}
\usepackage{array}
\usepackage{booktabs} %
\usepackage{makecell}
\usepackage{indentfirst}
\usepackage{amsfonts}
\usepackage{physics}
\usepackage{mathdots}
\usepackage{yhmath}
\usepackage{cancel}
\usepackage{hyperref}
\usepackage{tabularx}
\usepackage{extarrows}
\usepackage{lastpage}
\usepackage{bm}

\theoremstyle{plain}
\newtheorem{thm}{Theorem}[section]
\newtheorem{prop}[thm]{Proposition}
\newtheorem{lem}[thm]{Lemma}
\newtheorem{cor}[thm]{Corollary}

\theoremstyle{definition}
\newtheorem{defn}[thm]{Definition}
\newtheorem{assumption}[thm]{Assumption}
\theoremstyle{remark}
\newtheorem{remark}[thm]{Remark}

\newcommand{\Real}{\mathbb R}
\def\eps{\varepsilon}

\def\argmin{\mathop{\rm argmin}}

\def\Cov{\mbox{Cov}}

\def\Var{\mbox{Var}}

\def\E{\mathbb{E}}
\def\tr{\mbox{tr}}

\def\uv{u}

\def\zv{\tilde x}

\newcommand{\Ec}{\mathcal{E}}

\newcommand{\Lc}{\mathcal{L}}

\newcommand{\Oc}{\mathcal{O}}
\newcommand{\Pc}{\mathcal{P}}

\newcommand{\mb}{\mbox}

\def\1v{\mathbf 1}
\def\0v{\mathbf 0}

\def\1{\mathbbm{1}}

\def\P{\mathbb{P}}

\def\tXv{\tilde{X}}

\def\tX{\tilde{X}}
\def\tYt{\tilde{Y}_t}
\def\tY{\tilde{Y}}

\newcommand{\kowcpi}{\Verb|KOWCPI|}

\begin{document}

\title{Kernel-based Optimally Weighted Conformal Time-Series Prediction
}
\date{}

\author[1]{Jonghyeok Lee}
\author[1]{Chen Xu}
\author[1]{Yao Xie\thanks{Correspondence: yao.xie@isye.gatech.edu}}
\affil[1]{{\small H. Milton Stewart School of Industrial and Systems Engineering, Georgia Institute of Technology.}}

\maketitle
\vspace{-0.3in}
\begin{abstract}
 In this work, we present a novel conformal prediction method for time-series, which we call Kernel-based Optimally Weighted Conformal Prediction Intervals (\kowcpi{}). Specifically, \kowcpi{} adapts the classic Reweighted Nadaraya-Watson (RNW) estimator for quantile regression on dependent data and learns optimal data-adaptive weights. 
 Theoretically, we tackle the challenge of establishing a conditional coverage guarantee for non-exchangeable data under strong mixing conditions on the non-conformity scores.
 We demonstrate the superior performance of \kowcpi{} on real and synthetic time-series data against state-of-the-art methods, where \kowcpi{} achieves narrower confidence intervals without losing coverage.    
\end{abstract}

\section{Introduction}\label{sec:introduction}

Conformal prediction, originated in \citet{vovk1999machine, vovk2005algorithmic}, offers a robust framework explicitly designed for reliable and distribution-free uncertainty quantification. Conformal prediction has become increasingly recognized and adopted within the domains of machine learning and statistics \citep{lei2013distribution, lei2014distribution, kim2020predictive,angelopoulos2023introduction}. Assuming nothing beyond the exchangeability of data, conformal prediction excels in generating valid prediction sets under any given significance level, irrespective of the underlying data distribution and model assumptions. This capability makes it particularly valuable for uncertainty quantification in settings characterized by diverse and complex models.

Going beyond the exchangeability assumption has been a research challenge, particularly as many real-world datasets (such as time-series data) are inherently non-exchangeable. \citet{tibshirani2019conformal}
addresses situations where a feature distribution shifts between training and test data and restores valid coverage through weighted quantiles based on the likelihood ratio of the distributions. 
More recently, \citet{barber2023conformal} bounds the coverage gap using the total variation distance between training and test data points and minimizes this gap using pre-specified data-independent weights. However, it remains open to how to appropriately optimize the weights.

To advance conformal prediction for time series, we extend the prior sequential predictive approach \citep{xu2023conformal, xu2023sequential} by incorporating nonparametric kernel regression into the quantile regression method on non-conformity scores. A key challenge of adapting this method to time-series data lies in selecting optimal weights to accommodate the dependent structure of the data. To ensure valid coverage of prediction sets, it is crucial to select weights inside the quantile estimator so that it closely approximates the true quantile of non-conformity scores.

In this paper, we introduce \kowcpi{}, which utilizes the Reweighted Nadaraya-Watson estimator \citep{hall1999methods} to facilitate the selection of data-dependent optimal weights. 
This approach anticipates that adaptive weights will enhance the robustness of uncertainty quantification, particularly when the assumption of exchangeability is compromised.
Our method also addresses the weight selection issue in the weighted quantile method presented by \citet{barber2023conformal}, as \kowcpi{} allows for the calculation of weights in a data-driven manner without prior knowledge about the data.

In summary, our main contributions are:
\begin{itemize}[itemsep=0em]
    \item We propose \kowcpi{}, a sequential time-series conformal prediction method that performs nonparametric kernel quantile regression on non-conformity scores. In particular, \kowcpi{} learns optimal data-driven weights used in the conditional quantiles.
    \item We prove the asymptotic conditional coverage guarantee of \kowcpi{} based on the classical theory of nonparametric regression. We further obtain the marginal coverage gap of \kowcpi{} using the general result for the weights on quantile for non-exchangeable data.
    \item We demonstrate the effectiveness of \kowcpi{} on real time-series data against state-of-the-art baselines. Specifically, \kowcpi{} can achieve the narrowest width of prediction intervals without losing marginal and approximate conditional (i.e., rolling) coverage empirically.
\end{itemize}

\subsection{Literature}\label{subsec:related_works}

\paragraph{RNW quantile regression}
In \citet{hall1999methods}, the Reweighted Nadaraya-Watson (RNW, often referred to as Weighted or Adjusted Nadaraya-Watson) estimator was suggested as a method to estimate the conditional distribution function from time-series data. This estimator extends the renowned Nadaraya-Watson estimator \citep{nadaraya1964estimating, watson1964smooth} by introducing an additional adjustment to the weights, thus combining the favorable bias properties of the local linear estimator with the benefit of being a distribution function by itself like the original Nadaraya-Watson estimator \citep{hall1999methods, yu1998local}. The theory of the regression quantile with the RNW estimator has been further developed by \citet{cai2002regression}.
Furthermore, \citet{cai2002regression} and \citet{salha2006kernel} demonstrated that the RNW estimator is consistent under strongly mixing conditions, which are commonly observed in time-series data.
In this work, we adaptively utilize the RNW estimator within the conformal prediction framework to construct sequential prediction intervals for time-series data, leveraging its data-driven weights for quantile estimation and the weighted conformal approach.

\paragraph{Conformal prediction with weighted quantiles}
Approaches using quantile regression instead of empirical quantiles in conformal prediction have been widespread \citep{romano2019conformalized, kivaranovic2020adaptive, gibbs2023conformal}. These methods utilize various quantile regression techniques to construct conformal prediction intervals, and the convergence to the oracle prediction width can be shown under the consistency of the quantile regression function \citep{sesia2020comparison}.
Another recent work by \citet{guan2023localized} uses kernel weighting based on the distance between the test point and data to perform localized conformal prediction, which further discusses the selection of kernels and bandwidths. Recent work in this direction of utilizing the weighted quantiles, including \citet{lee2023distribution}, continues to be vibrant. As we will discuss later, our approach leverages techniques in classical non-parametric statistics when constructing the weights.

\paragraph{Time-series conformal prediction} There is a growing body of research on time-series conformal prediction \citep{xu2021conformal,gibbs2021adaptive}. Various applications include financial markets \citep{gibbs2021adaptive}, anomaly detection \citep{xu2021anomaly}, and geological classification \citep{xu2022conformal}.
In particular, \citet{gibbs2021adaptive, gibbs2022conformal} sequentially construct prediction intervals by updating the significance level $\alpha$ based on the mis-coverage rate. {This approach has become a major methodology for handling online, non-exchangeable data, leading to several subsequent developments of adaptively updating the single-parameter threshold that determines the prediction sets at each time step \citep{feldman2022achieving, auer2023conformal, bhatnagar2023improved, zaffran2022adaptive, angelopoulos2023conformal, yang2024bellman, pmlr-v235-angelopoulos24a}.}
On the other hand, \citet{xu2023sequential,xu2024conformal} take a slightly different approach by conducting sequential quantile regression using non-conformity scores. Our study aims to integrate non-parametric kernel estimation for sequential quantile regression, addressing the weight selection issues identified by \citet{barber2023conformal}. Additionally, our research aligns with \citet{guan2023localized}, particularly in utilizing a dissimilarity measure between the test point and the past data.

\section{Problem Setup}
We begin by assuming that the observations of the random sequence $(X_t, Y_t) \in \Real^d \times \Real$, $t = 1, 2, \ldots$ are obtained sequentially.
Notably, $X_t$ may represent exogenous variables that aid in predicting $Y_t$, the historical values of $Y_t$ itself, or a combination of both. (In Appendix \ref{app:multivar_response}, we expand our discussion to include cases where the response $Y_t$ is multivariate.) A key aspect of our setup is that the data are non-exchangeable and exhibit dependencies, which are typical in time-series data where temporal or sequential dependencies influence predictive dynamics.

Suppose we are given a pre-specified point predictor $\hat f: \Real^d \to \Real$ trained on a separate dataset or on past data. This predictor $\hat f$ maps a feature variable in $\Real^d$ to a scalar point prediction for $Y_t$. Given a user-specified significance level $\alpha \in (0,1)$, we use the initial $T$ observations to construct prediction intervals $\widehat C_{t-1}^\alpha (X_t)$ for $Y_t$ in a sequential manner from $t = T+1$ onwards.

Two key types of coverage targeted by prediction intervals are \textit{marginal coverage} and \textit{conditional coverage}. Marginal coverage is defined as
\begin{equation}
\mathbb{P}(Y_t \in \widehat C_{t-1}^\alpha (X_t)) \geq 1 - \alpha,
\end{equation}
which ensures that the true value $Y_t$ falls within the interval $\widehat C_{t-1}^\alpha (X_t)$ at least $100(1-\alpha)\%$ of the time, averaged over all instances. On the other hand, conditional coverage is defined as
\begin{equation}
\mathbb{P}(Y_t \in \widehat C_{t-1}^\alpha (X_t) \mid X_t) \geq 1 - \alpha,
\end{equation}
which is a stronger guarantee ensuring that given each value of predictor $X_t$, the true value $Y_t$ falls within the interval $\widehat C_{t-1}^\alpha (X_t)$ at least $100(1-\alpha)\%$ of the time.

\section{Method}\label{sec:methods}

In this section, we introduce our proposed method, \kowcpi{} (Kernel-based Optimally Weighted Conformal Prediction Intervals), which embodies our approach to enhancing prediction accuracy and robustness in the face of time-series data. We delve into the methodology and algorithm of \kowcpi{} in-depth, highlighting how the Reweighted Nadaraya-Watson (RNW) estimator integrates with our predictive framework.

Consider prediction for a univariate time series, $Y_1, Y_2, \ldots$.
We have predictors $X_t$ given to us at time $t$, $t = 1, 2, \ldots$, which can depend on the past observations $(Y_{t-1}, Y_{t-2}, \ldots)$, and possibly other exogeneous time series $Z_t$. Given a pre-trained algorithm $\hat f$, we also have a sequence of non-conformity scores indicating the accuracy of the prediction:
\[
    \hat \eps_t = Y_t - \hat f(X_t), \quad t = 1, 2, \ldots.
\]
We denote the collection of the past $T$ non-conformity scores at time $t > T$ as
\[
    \Ec_t = (\hat \eps_{t-1}, \ldots, \hat \eps_{t-T}), %
\]
We construct the prediction interval $\widehat C_{t-1}^\alpha(X_t)$ with significance level $\alpha$ at time $t > T$ as follows:
\begin{align}\label{eq:PI_def}
\begin{split}
    \widehat{C}_{t-1}^\alpha(X_t)  &= [\hat{f}(X_t) + \hat q_{\beta^*}(\Ec_t), \hat{f}(X_t) + \hat q_{1 - \alpha + \beta^*}(\Ec_t)],\\
    \beta^* &= \argmin_{\beta \in [0,\alpha]} \left( \hat q_{1 - \alpha + \beta}(\Ec_t) - \hat q_{\beta}(\Ec_t) \right).
\end{split}
\end{align}
Here, $\hat q_\beta$ is a quantile regression algorithm that returns an estimate of the $\beta$-quantile of the residuals, which we will explain through this section. We consider asymmetrical confidence intervals to ensure the tightest possible coverage. 

\subsection{Reweighted Nadaraya-Watson estimator}
The Reweighted Nadaraya-Watson (RNW) estimator is a general and popular method for quantile regression for time series. Observe $(\tilde X_i, \tilde Y_i)$, $i =1, \ldots, n$, where $\tilde Y_i \in \mathbb R$, and the predictors $\tilde X_i$ can be $p$-dimensional. The goal is to predict the quantile $\mathbb P(\tilde Y \leq b|\tilde X)$, $b \in \Real$, given a test point  $\tilde X$ using training samples. 
The RNW estimator introduces \textit{adjustment weights} on the predictors to ensure consistent estimation. We define the probability-like adjustment weights $p_i(\tilde X)$, $i = 1, \ldots, n$, by maximizing the empirical log-likelihood $\sum_{i=1}^{n} \log p_i(\tilde X)$,  %
subject to $p_i(\tilde X) \ge 0$, and 
\begin{align}
    \label{eq:weight_constraint_2} &\sum_{i=1}^{n} p_i(\tilde X) = 1,\\
    \label{eq:weight_constraint_3} &\sum_{i=1}^{n} p_i(\tilde X) (\tX_{i} - \tilde X) K_h(\tX_{i} - \tilde X) =0.
\end{align}
The RNW estimate of the conditional CDF $\P(\tilde Y \le b | \tilde X)$ is defined as follows:
\begin{align}\label{eq:def_RNW}
    \widehat{F}(b |\tilde X)
    = \sum_{i=1}^{n} \widehat W_i(\tilde X) \1(\tY_{i}\leq b),
\end{align}
where the final weights $\widehat{W}_i$ are given by
\begin{equation}\label{eq:RNW_weights}
    \widehat W_i(\tilde X) = \frac{p_i(\tilde X) K_h(\tX_{i} - \tilde X)}{\sum_{i=1}^{n} p_i(\tilde X)K_h(\tX_{i} - \tilde X)}, \quad i = 1, \ldots, n,
\end{equation}
{computed as a weighted average of the adjustment weights $p_i$ based on the similarity between $\tilde X$ to each sample $\tilde X_i$ measured by the kernel function $K \colon \Real^p \to \Real$.}
Here, $K_h(\uv) = h^{-p}  K(h^{-1} \uv)$ for $\uv \in \Real^p$. Any reasonable choice of kernel function is possible; however, to ensure the validity of our theoretical results discussed in Section \ref{sec:Theory}, the kernel should be nonnegative, bounded, continuous, and possess compact support. An example is $K(u) = k(\norm{u})$, where $k: \Real \to \Real$ is the Epanechnikov kernel.

The primary computational burden of the RNW estimator lies in calculating the adjustment weights $p_i$. However, as Lemma \ref{lem:p_i} shows, this reduces to a simple one-dimensional convex minimization problem, ensuring that the RNW estimator is not computationally intensive. This simplification significantly alleviates the overall computational complexity. Furthermore, Lemma \ref{lem:p_i} serves as a starting point for the proof of the asymptotic conditional coverage property of our algorithm, which will be addressed in Appendix \ref{app_proof:asymptotic_conditional_coverage}.

\begin{lem}[\citep{hall1999methods, cai2001weighted}]\label{lem:p_i}
    The adjustment weights $p_i(\tilde X)$, $i = 1, \ldots, n$, for the RNW estimator are given as
    \begin{equation}\label{eq:p_i}
    p_i(\tilde X) = \frac{1}{n} \left[ 1 + \lambda ([\tX_{i}]_1 - [\tilde X]_1) K_h(\tX_{i} - \tilde X) \right]^{-1},
    \end{equation}
    where $[X]_1$ denotes the first element of a vector $X$, and $\lambda \in \Real$ is the minimizer of: %
    \begin{equation}\label{eq:L_lambda}
    L(\lambda; \tilde X) = - \sum_{i=1}^{n} \log \{1 + \lambda([\tX_{i}]_1 - [\tilde X]_1) K_h(\tX_{i} - \tilde X)\}.
    \end{equation}
\end{lem}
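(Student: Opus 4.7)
The plan is to attack this as a standard constrained empirical-likelihood problem via Lagrange multipliers. Write $c_i := ([\tX_i]_1 - [\tilde X]_1)\, K_h(\tX_i - \tilde X)$ for brevity, so that \eqref{eq:weight_constraint_2}--\eqref{eq:weight_constraint_3} become $\sum_i p_i = 1$ and $\sum_i p_i c_i = 0$. Since $-\sum_i \log p_i$ is strictly convex and the constraints are affine, the KKT conditions yield a unique maximiser.

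Form the Lagrangian
\[
\mathcal{L}(\mathbf{p}, \mu, \lambda) = \sum_{i=1}^n \log p_i - \mu\!\left(\sum_{i=1}^n p_i - 1\right) - \lambda \sum_{i=1}^n p_i c_i,
\]
and impose $\partial_{p_i}\mathcal{L} = 0$ to obtain the stationarity condition $p_i = 1/(\mu + \lambda c_i)$. The key algebraic trick to eliminate the multiplier $\mu$ is to observe
\[
\frac{c_i}{\mu + \lambda c_i} = \frac{1}{\lambda}\left(1 - \frac{\mu}{\mu + \lambda c_i}\right);
\]
summing over $i$ and invoking both constraints simultaneously forces $0 = (n - \mu)/\lambda$, hence $\mu = n$. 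Substituting back and reparametrising $\lambda \mapsto \lambda/n$ to absorb the constant produces exactly the claimed form $p_i(\tilde X) = \tfrac{1}{n}[1 + \lambda c_i]^{-1}$.

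It then remains to identify the scalar $\lambda$ as the minimiser of $L(\lambda; \tilde X)$. A direct differentiation gives
\[
L'(\lambda) = -\sum_{i=1}^n \frac{c_i}{1 + \lambda c_i} = -n \sum_{i=1}^n p_i(\tilde X)\, c_i,
\]
which vanishes precisely when \eqref{eq:weight_constraint_3} holds. Convexity of $L$ follows from
\[
L''(\lambda) = \sum_{i=1}^n \frac{c_i^2}{(1 + \lambda c_i)^2} \ge 0,
\]
so any critical point is the global minimiser over the domain of $L$.

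The main obstacle I anticipate is making the positivity/feasibility argument rigorous: the derivation above implicitly requires $1 + \lambda c_i > 0$ for every $i$, both so that the $p_i$ are valid probabilities and so that the logarithms in $L$ are defined. The clean resolution is to restrict to the open convex set $\Lambda := \{\lambda : 1 + \lambda c_i > 0 \text{ for all } i\}$, on which $L$ is smooth and convex and blows up to $+\infty$ at the boundary; a finite interior minimiser then exists as soon as $0$ lies in the interior of the convex hull of $\{c_1,\ldots,c_n\}$, which is the standard empirical-likelihood non-degeneracy condition and is generically satisfied. Under this mild condition the KKT system has the unique solution stated in the lemma.
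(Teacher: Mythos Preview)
Your proof is correct and follows essentially the same Lagrange-multiplier route as the paper: both set up the Lagrangian with two multipliers, solve the stationarity condition for $p_i$, eliminate the normalization multiplier (the paper shows the denominator sums to $n$ while you derive $\mu=n$ via the identity $c_i/(\mu+\lambda c_i)=\lambda^{-1}(1-\mu p_i)$), and then identify $\lambda$ as the minimiser of $L$ by computing $L'$ and checking $L''\ge 0$. Your added discussion of feasibility on the domain $\Lambda=\{\lambda:1+\lambda c_i>0\ \forall i\}$ is a useful refinement that the paper's proof omits.
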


\subsection{RNW for conformal prediction}

\begin{figure}
\begin{center}
\includegraphics[width=.8\linewidth]{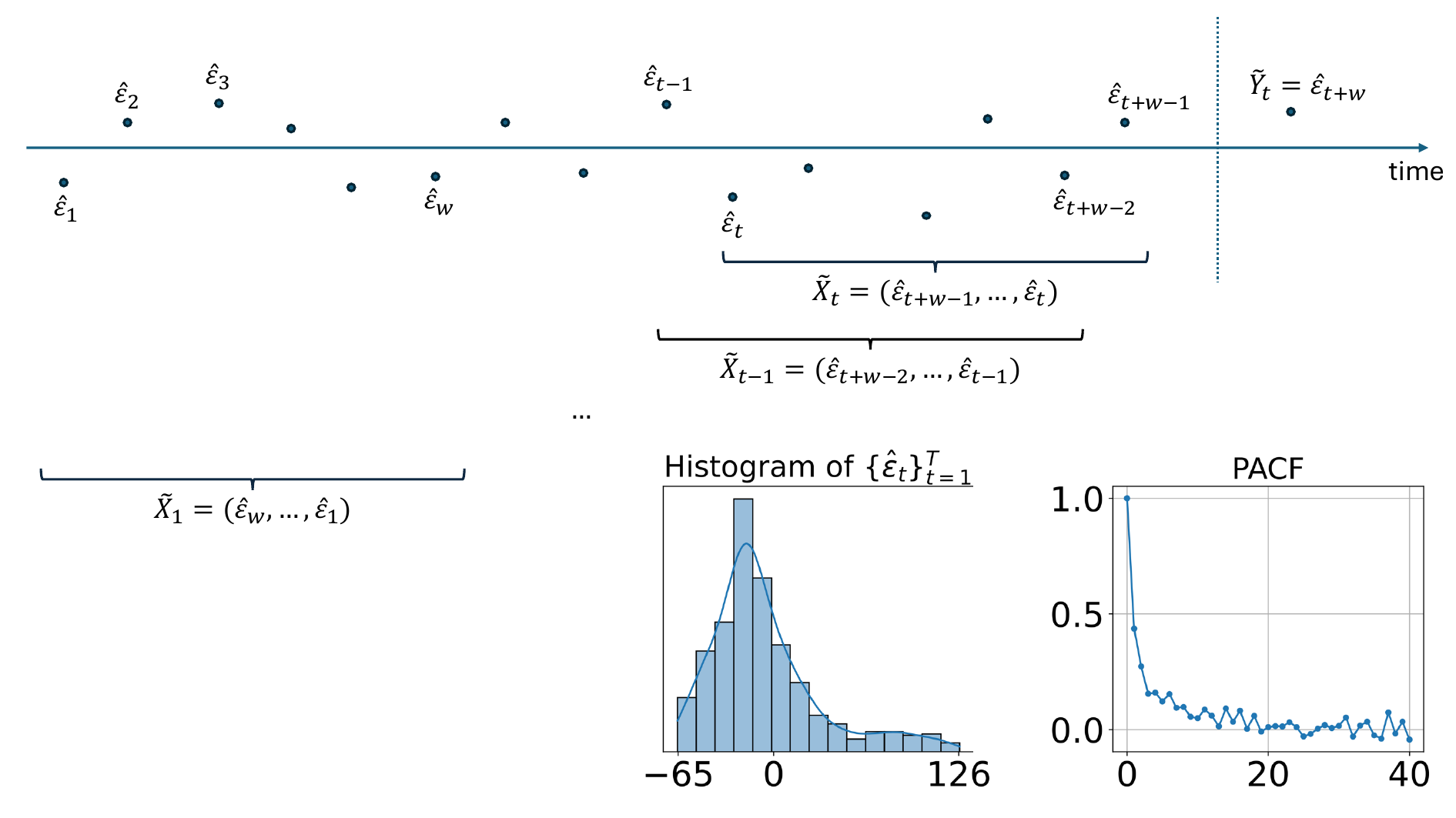}
\caption{Illustration of \texttt{KOWCPI}, a sequential conformal prediction method. In the absence of exchangeability in the data, as indicated by the empirical distribution of residuals and the PACF plot, it is critical to consider the sequentially dependent structure of the data. In \texttt{KOWCPI}, non-conformity score blocks are updated sequentially using a sliding window, which provides prediction intervals for future scores through nonparametric quantile regression.}
\end{center}
\end{figure}

To perform the quantile regression for prediction interval construction at time $t = T+1$, we use a sliding window approach, breaking the past $T$ residuals $\Ec_{T+1} = (\hat \eps_{T}, \ldots \hat \eps_{1})$ into $n \coloneqq (T-w)$ overlapping segments of length $w$. We construct the predictors and responses to fit the RNW estimator as follows:
\[
\tilde Y_i = \hat \eps_{i+w}, \quad \tilde X_{i} = (\hat \eps_{i+w-1}, \ldots, \hat \eps_{i}), \quad i = 1, \ldots, n.
\]
With RNW estimator $\widehat F$ fitted on $((\tilde X_i, \tilde Y_i))_{i=1}^n$, the conditional $\beta$-quantile estimator $\widehat Q_\beta$ is defined as
\begin{equation}\label{eq:RNW-QR}
    \widehat{Q}_\beta(\tilde X) \coloneqq \inf \{ \tilde y \in \Real \colon \widehat{F}(\tilde y|\tilde X) \ge \beta \}.
\end{equation}
After time $t = T + 1$, we update $\Ec_t$ by removing the oldest residual and adding the newest one, then repeat the process (see Algorithm \ref{alg:kowcpi}). In Section \ref{sec:Theory}, we prove that due to the consistency of $\widehat Q_\beta$, \kowcpi{} achieves asymptotic conditional coverage despite the significant temporal dependence introduced by using overlapping segments of residuals.

\begin{algorithm}[t]
\caption{{Kernel-based Optimally Weighted Conformal Prediction Intervals (\protect \kowcpi)}}
\label{alg:kowcpi}
\begin{algorithmic}[1]
\REQUIRE Training data $(X_t, Y_t)$, $t = 1, \ldots, T$, pre-trained point predictor $\hat f$, target significance level $\alpha \in (0, 1)$, {window length $w$, non-conformity score block count $n = T - w$.}

\STATE Compute prediction residuals for the training data: $\hat \eps_t = Y_t - \hat f(X_t)$, $t = 1, \ldots, T$.
\FOR{$t = T+1, T+2, \ldots$}
    \STATE Update residual history $\Ec_t = (\hat \eps_{t-1}, \ldots, \hat \eps_{t-T})$.
    \STATE Break $\Ec_t$ into overlapping segments: $\tilde{X}_i = (\hat \eps_{t-T+i+w-2}, \ldots, \hat \eps_{t-T+i-1})$, $i = 1, \ldots, n+1$.
    \STATE Compute $\lambda$ that minimizes $L(\cdot; \tilde{X}_{n+1})$ in \eqref{eq:L_lambda}.
    \STATE Derive adjustment weights $p_i(\tilde{X}_{n+1})$, $i = 1, \ldots, n$, and calculate final weights $\widehat{W}_i(\tilde{X}_{n+1})$ in \eqref{eq:RNW_weights}.
    \STATE Using $\tilde{Y}_i = \hat \eps_{t-T+i+w-1}$, $i = 1, \ldots, n$, compute the quantile estimator $\widehat{Q}_\beta(\tilde{X}_{n+1})$ for $\beta \in [0, \alpha]$.
    \STATE Determine $\beta^* = \argmin_{\beta \in [0, \alpha]} \widehat{Q}_{1 - \alpha + \beta}(\tilde{X}_{n+1}) - \widehat{Q}_\beta(\tilde{X}_{n+1})$.
    \STATE Return prediction interval $\widehat{C}_{t-1}^\alpha(X_t) = \big[\hat{f}(X_t) + \widehat{Q}_{\beta^*}(\tilde{X}_{n+1}), ~\hat{f}(X_t) + \widehat{Q}_{1 - \alpha + \beta^*}(\tilde{X}_{n+1})\big]$.
    \STATE Obtain new residual $\hat \eps_t = Y_t - \hat f(X_t)$.
\ENDFOR
\end{algorithmic}
\end{algorithm}

\noindent \textit{Bandwidth selection.}
Estimating the theoretically optimal bandwidth that minimizes the asymptotic mean-squared error requires additional derivative estimation, which significantly complicates the problem.
Consequently, similar to general non-parametric models, one can use cross-validation to select the bandwidth. However, cross-validation can be computationally burdensome and may deteriorate under dependent data \citep{fan1995local}. Therefore, we adapt the non-parametric AIC \citep{cai2000application}, used for bandwidth selection in local linear estimators. This method is applicable because the RNW estimator belongs to the class of linear smoother \citep{cai2002regression}.
Let $S$ be a linear smooth operator, with the $(i,j)$-th element given by
$
    S_{ij} = \widehat W_j(\tX_i)
$
\citep{hastie2017generalized}.
Recognizing that the degree of freedom of the RNW smoother can be given as $\tr(S S^\top)$, we choose the bandwidth $h$ that minimizes
\begin{equation}\label{eq:nonpara_AIC}
    {\rm AIC}_C(h) \coloneqq \log({\rm RSS}) + \frac{n + \tr(S S^\top)}{n - (\tr(S S^\top) + 2)},
\end{equation}
where ${\rm RSS}$ is the residual sum of squares.

\noindent \textit{Window length selection.}
To select the window length $w$, cross-validation can be employed. In experiments, we chose $w$ with the smallest average width that achieves a target coverage in the validation set. Another approach is to use a weighted sum of the average under-coverage rate and the average width obtained for a given $w$ as the criterion.
We note that the performance is generally less sensitive to the choice of  $w$ across a broader range compared to the bandwidth $h$. {Additionally, in Appendix \ref{app:adaptive}, we introduce an adaptive window selection approach that enables $w$ to be determined in a data-driven manner, eliminating the need for hyperparameter tuning with minimal loss in performance.}

\section{Theory}\label{sec:Theory}
In this section, we introduce the theoretical properties of the RNW estimator, a quantile regression method we use, and demonstrate in Theorem \ref{thm:asymptotic_conditional_coverage} that our \kowcpi{} asymptotically displays conditional coverage under the strong mixing of residuals.
It turns out that the asymptotic conditional coverage gap can be derived from known results in the context of kernel quantile regression.

\subsection{Marginal coverage}
We begin by bounding the marginal coverage gap of the \kowcpi{} method. %
The following result shows the coverage gap using our weights, compared with the oracle weights; the results are established using a similar strategy as in \citet[Lemma 3]{tibshirani2019conformal}:
\begin{prop}[Non-asymptotic marginal coverage gap]\label{prop:marginal_coverage}
    Denote by $\Pc$ the joint density of $(\tilde Y_1, \ldots, \tilde Y_{n+1})$. Then, we have
    \begin{align}
    \begin{split}
        &\abs{\P\left(Y_{T+1} \in \widehat C_{T}^\alpha(X_{T+1})\right) - (1 - \alpha)}
        \le \frac{1}{2} \left( \E \norm{ (W^*)_{1:n} - \widehat W}_1 + \E W^*_{n+1} \right) +
        2 \E \Delta(\tilde X_{n+1})
    \end{split}
    \end{align}
    where $\Delta(\tilde X_{n+1})$ is the discrete gap defined in \eqref{eq:discrete_gap}, and $W^*$ is the vector of oracle weights with each entry defined as
\begin{equation}\label{eq:oracle_W}
        W_i^*:=\frac{\sum_{\sigma: \sigma(n+1) = i} \Pc (\tilde Y_{\sigma(1)}, \ldots, \tilde Y_{\sigma(n+1)})}{\sum_\sigma \Pc (\tilde Y_{\sigma(1)}, \ldots, \tilde Y_{\sigma(n+1)})}, \quad i = 1, \ldots, {n+1}, \quad \mbox{(oracle weights)}
    \end{equation}
    and $\sigma$ is a permutation on $\{1, \ldots, n+1\}$.
\end{prop}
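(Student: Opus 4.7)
The plan is to adapt the symmetrization argument of \citet[Lemma 3]{tibshirani2019conformal}, modified to handle three features absent from the i.i.d.\ covariate-shift setting: (i) the data are non-exchangeable, so likelihood ratios must be replaced by the permutation-symmetrized oracle weights $W^*$; (ii) the RNW weights $\widehat W$ are supported only on indices $\{1,\ldots,n\}$, so an extra $W^*_{n+1}$ term appears to account for the missing mass at the test index; and (iii) the interval has two, asymmetric endpoints, which doubles the discretization contribution $\Delta(\tX_{n+1})$.

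First, I would rewrite the coverage event in terms of the last non-conformity score $\tY_{n+1} = \hat \eps_{T+1}$. By \eqref{eq:PI_def} and \eqref{eq:RNW-QR}, the event $\{Y_{T+1} \in \widehat C_{T}^\alpha(X_{T+1})\}$ is equivalent, up to the standard discrete-quantile slack absorbed into $\Delta$, to $\{\widehat F(\tY_{n+1} \mid \tX_{n+1}) \in [\beta^*, 1 - \alpha + \beta^*]\}$, with $\widehat F$ the RNW CDF in \eqref{eq:def_RNW}. Let $F^*(\cdot \mid \tX_{n+1}) = \sum_{i=1}^{n+1} W_i^* \1(\tY_i \le \cdot)$ be the oracle-weighted CDF, and pad $\widehat W$ with a zero at index $n+1$. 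Both are CDFs of discrete probability measures on $\{\tY_1, \ldots, \tY_{n+1}\}$, and their total-variation distance equals $\tfrac12 \norm{\widehat W - W^*}_1 = \tfrac12 (\norm{W^*_{1:n} - \widehat W}_1 + W^*_{n+1})$. Since any event has probability differing by at most the total variation, the coverage probabilities under $\widehat F$ and under $F^*$ differ by at most $\tfrac12 \E (\norm{W^*_{1:n} - \widehat W}_1 + W^*_{n+1})$.

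Second, I would argue that the coverage under $F^*$ equals $1 - \alpha$ up to $2 \E \Delta(\tX_{n+1})$. This is the standard symmetrization step adapted to non-exchangeable data: conditional on the unordered multiset $\{\tY_1, \ldots, \tY_{n+1}\}$, the joint density $\Pc$ weights each permutation $\sigma$ of the indices by $\Pc(\tY_{\sigma(1)}, \ldots, \tY_{\sigma(n+1)})$, so by construction the conditional probability that the realized last observation equals $\tY_i$ is exactly $W^*_i$. Consequently $F^*(\tY_{n+1} \mid \tX_{n+1})$ is, conditionally, uniform on a discrete grid with spacing controlled by $\Delta(\tX_{n+1})$, and the probability that it lies in $[\beta^*, 1 - \alpha + \beta^*]$ equals $1 - \alpha$ up to the discretization mass at each of the two endpoints, giving the factor $2$.

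Combining the two displays via the triangle inequality produces the stated bound. The principal obstacle I anticipate is the conditioning and measurability bookkeeping: because $\tX_{n+1}$ is itself a sub-vector of the residual history, the permutation symmetrization must be carried out on the full multiset of residuals while preserving the adjacency structure that defines $(\tX_i, \tY_i)$, and one must check that the RNW construction behaves consistently under this conditioning. A secondary subtlety is the data-dependence of $\beta^*$, which is resolved by noting that the first-step total-variation bound is pointwise in the quantile level and therefore survives taking the (random) minimizer $\beta^* \in [0, \alpha]$.
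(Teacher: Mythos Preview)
Your overall strategy matches the paper's: (a) condition on the unordered set $\{\tilde Y_1,\ldots,\tilde Y_{n+1}\}$ so that, by the permutation-symmetrization, $\tilde Y_{n+1}\mid E \sim P^{W^*}=\sum_{i=1}^{n+1}W_i^*\delta_{v_i}$; (b) bound the gap between $P^{W^*}$ and $P^{\widehat W}=\sum_{i=1}^n\widehat W_i\delta_{v_i}$ by total variation; and (c) control the remaining discretization by $2\Delta$. This is exactly the paper's decomposition.

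However, in your step~2 you have swapped the roles of $\widehat F$ and $F^*$, and the justification you give there does not hold. The $2\Delta$ term arises because the interval endpoints are $\widehat F$-quantiles: if $V\sim P^{\widehat W}$, then
\[
\P_{V\sim P^{\widehat W}}\bigl(V\in[\widehat Q_{\beta^*},\widehat Q_{1-\alpha+\beta^*}]\bigr)
=\widehat F(\widehat Q_{1-\alpha+\beta^*})-\widehat F(\widehat Q_{\beta^*}^-),
\]
and by definition of the generalized inverse $|\widehat F(\widehat Q_\beta)-\beta|\le \Delta=\max_i\widehat W_i$. So the $2\Delta$ bound is a statement about coverage \emph{under $P^{\widehat W}$}, not under $P^{W^*}$. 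Your claim that ``$F^*(\tilde Y_{n+1})$ is, conditionally, uniform on a discrete grid with spacing controlled by $\Delta(\tilde X_{n+1})$'' is false as written: under $P^{W^*}$ the jumps of $F^*$ have size $W_i^*$, so the grid spacing is $\max_i W_i^*$, which is neither $\Delta$ nor bounded by any quantity appearing in the proposition. Consequently there is no direct argument that $\P_{V\sim P^{W^*}}(V\in\text{interval})$ is within $2\Delta$ of $1-\alpha$; you must pass through $P^{\widehat W}$ first. The correct chain is
\[
\text{true coverage}\ \overset{\text{symm.}}{=}\ \P_{V\sim P^{W^*}}(V\in\text{interval})\ \overset{\text{TV}}{\approx}\ \P_{V\sim P^{\widehat W}}(V\in\text{interval})\ \overset{2\Delta}{\approx}\ 1-\alpha,
\]
which is precisely the paper's route. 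Once you reassign the $2\Delta$ bound to the $P^{\widehat W}$ side and use the symmetrization solely to identify the true coverage with the $P^{W^*}$ probability, your argument goes through and coincides with the paper's.
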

The implication of Proposition \ref{prop:marginal_coverage} is that 
\begin{itemize}
\item 
The ``under-coverage'' depends on the $\ell_1$-distance between the learned optimal weights and oracle-optimal weights (that depends on the true joint distribution of data).
\item 
Note that the oracle weights $W_i^*$ cannot be evaluated, because in principle, it requires considering the $(n+1)!$ possible shuffled observed residuals and their joint distributions. 
\item 
The form of the oracle weights $W_i^*$ from \eqref{eq:oracle_W} offers an intuitive basis for algorithm development: we can practically estimate the weights through quantile regression, utilizing previously observed non-conformity scores.
\end{itemize}

\subsection{Conditional coverage}

In this section, we derive the asymptotic conditional coverage property of \kowcpi. For this, we introduce the assumptions necessary for the consistency of the RNW estimator. To account for the dependency in the data, we assume the strong mixing of the residual process.

A stationary stochastic process $\{V_t\}_{t=-\infty}^\infty$ on a probability space with a probability measure $\P$ is said to be \textit{strongly mixing ($\alpha$-mixing)} if a mixing coefficient $\alpha(\tau)$ defined as
\[
    \alpha(\tau) = \sup_{A \in \mathfrak M_{-\infty}^0, B \in \mathfrak M_{\tau}^\infty} \abs{\P(A \cap B) - \P(A) \P(B)}
\]
satisfies $\alpha(\tau) \to 0$ as $\tau \to \infty$, where $\mathfrak M_s^t$, $-\infty \le s \le t \le \infty$, denotes a $\sigma$-algebra generated by $\{V_s, V_{s+1}, \ldots, V_t\}$. The mixing coefficient $\alpha(\tau)$ quantifies the asymptotic independence between the past and future of the sequence $\{V_t\}_{t=-\infty}^\infty$.
\begin{assumption}[Mixing of the process]\label{assumption:alpha_mixing}
    The stationary process $(V_i=(\tilde X_i, \tilde Y_i))_{i=1}^\infty$ is strongly mixing with the mixing coefficient $\alpha(\tau) = \Oc(\tau^{-(2 + \delta)})$ for some $\delta > 0$.
\end{assumption}

We highlight that our strong mixing assumptions apply to the residuals, which is a far less restrictive condition than assuming the original time series itself is strongly mixing. Even when the original time series departs significantly from stationarity, the unobserved noises may still retain stationarity and strong mixing properties. For instance, in a vector auto-regressive model with a time-dependent drift, the noises are drawn from the identical distribution without serial correlation.

Furthermore, the strong mixing property is widely regarded as a relatively weak condition and is commonly met by many time series models, making it a typical assumption in time-series analysis \citep{cai2002regression}. For instance, both linear autoregressive models and the broader class of bilinear models satisfy strong mixing conditions with exponentially decaying mixing coefficients under mild assumptions. Similarly, ARCH processes and nonlinear additive autoregressive models with exogenous variables are recognized for their stationary and strong mixing behavior \citep{masry1995nonparametric, masry1997additive}.

Due to stationarity, the conditional CDF of the realized residual does not depend on the index $i$; thus, denote
\[
    F(b|\zv) = \P(\tilde Y_i \le b | \tilde X_i = \zv),
\]
as the conditional CDF of the random variable $\tilde Y_i$ given $\tilde X_i = \zv$. In addition, we introduce the following notations:
\begin{itemize}[leftmargin=.5in]
\item Let $g(\zv)$ be the marginal density of $\tilde X_i$ at $\zv$. (Note that due to stationarity, we can have a common marginal density.)
\item Let $g_{1,i}, i \ge 2$ denote the joint density of $\tX_1$ and $\tilde X_i$.
\end{itemize}

The following assumptions (\ref{assumption:density_and_CDF}-\ref{assumption:bandwidth}) are common in nonparametric statistics, essential for attaining desirable properties such as the consistency of an estimator \citep{tsybakov2009introduction}.
\begin{assumption}[Smoothness of the conditional CDF and densities] \label{assumption:density_and_CDF}\leavevmode For fixed $\tilde y \in \Real$ and $\zv \in \Real^w$,
    \begin{enumerate}[label = (\roman*)]
    \item $0 < F(\tilde y|\zv) < 1$. \label{subassumption:cdf-bounded_away}
    \item $F(\tilde y|\zv)$ is twice continuously partially differentiable with respect to $\zv$. \label{subassumption:cdf-smooth}
    \item $g(\zv) > 0$ and $g(\cdot)$ is continuous at $\zv$. \label{subassumption:density-marginal}
    \item There exists $M > 0$ such that $\abs{g_{1,i}(u, v) - g(u) g(v)} \le M$ for all $u, v$ and $i \ge 2$. \label{subassumption:density-joint}
    \end{enumerate}
\end{assumption}
Regarding Assumption \ref{assumption:density_and_CDF}, we would like to remark that there is a negative result: without additional assumptions about the distribution, it is impossible to construct finite-length prediction intervals that satisfy conditional coverage \citep{lei2014distribution, vovk2012conditional}.
\begin{assumption}[Regularity of the kernel function]\label{assumption:kernel}
    The kernel $K: \Real^w \to \Real$ is a nonnegative, bounded, continuous, and compactly supported density function satisfying
    \begin{enumerate}[label = (\roman*)]
        \item $\int_{\Real^w} \uv K(\uv) d\uv = 0$, \label{subassumption:kernel-sym}
        \item $\int_{\Real^w} \uv \uv^\top K(\uv) d\uv = \mu_2 I$ for some $\mu_2 \in (0, \infty)$, \label{subassumption:kernel-mu}
        \item $\int_{\Real^w} K^2(\uv) d\uv = \nu_0$ and $\int_{\Real^w} \uv \uv^\top K^2(\uv) d\uv = \nu_2 I$ for some $\nu_0, \nu_2 \in (0, \infty)$. \label{subassumption:kernel-nu}
    \end{enumerate}
\end{assumption}
Assumptions \ref{assumption:kernel}-\ref{subassumption:kernel-sym}, \ref{subassumption:kernel-mu}, and \ref{subassumption:kernel-nu} are standard conditions \citep{wand1994kernel} that require $K$ to be “symmetric” in a sense that that the weighting scheme relies solely on the distance between the observation and the test point.
For example, if $K$ is isometric, i.e., $K(u) = k(\norm{u})$ for some univariate kernel function $k: \mathbb{R} \to \mathbb{R}$, it can satisfy these conditions using widely adopted kernels such as the Epanechnikov kernel.
\begin{assumption}[Bandwidth selection]\label{assumption:bandwidth}
    As $n \to \infty$, the bandwidth $h$ satisfies
    \[
        h \to 0, \mb{ and }\, nh^{w(1 + 2/\delta)} \to \infty.
    \]
\end{assumption}
We note that Assumption \ref{assumption:bandwidth} is met when selecting the (theoretically) optimal bandwidth $h^* \sim n^{-1/(w+4)}$, which minimizes the asymptotic mean squared error (AMSE) of the RNW estimator, provided that $\delta > 1/2$.

We prove the following proposition following a similar strategy as \citep{salha2006kernel} by fixing several technical details: 
\begin{prop}[Consistency of the RNW estimator]\label{Prop:RNW_consistency}
    Under Assumptions \ref{assumption:alpha_mixing}-\ref{assumption:bandwidth}, given arbitrary $\tilde x$ and $\tilde y$, as $n \rightarrow \infty$, 
    \begin{align}\label{eq:F_hat_diff_conc}
        \widehat{F}(\tilde y|\zv) - F(\tilde y|\zv)= \frac{1}{2}h^2 \tr(D_{\zv}^2(F(\tilde y|\zv))) \mu_2 + o_p(h^2) + \Oc_p((nh^w)^{-1/2}).
    \end{align}
    {where $D_{\tilde{x}}^2 F(\tilde{y}|\tilde{x})$ denotes the Hessian of  $F(\tilde{y}|\tilde{x})$ with respect to $\tilde{x}$.}
\end{prop}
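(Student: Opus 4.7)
The plan is to follow the stochastic-expansion strategy of \citet{cai2002regression} and \citet{salha2006kernel}, while carefully verifying that the polynomial $\alpha$-mixing rate in Assumption~\ref{assumption:alpha_mixing} is strong enough to deliver the claimed $\Oc_p((nh^w)^{-1/2})$ remainder. The proof decomposes into three pieces: (i) controlling the empirical-likelihood Lagrange multiplier $\lambda$ from Lemma~\ref{lem:p_i}; (ii) Taylor-expanding the adjustment weights $p_i$ in $\lambda$ to reduce $\widehat F(\tilde y|\zv)$ to a ratio of two kernel sums plus a first-order correction; and (iii) carrying out bias/variance calculations on those sums, where the adjustment-weight correction precisely cancels the first-derivative bias that plagues the unreweighted Nadaraya--Watson estimator.

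First, I would analyze the stationary point of \eqref{eq:L_lambda}, namely $\sum_i \frac{([\tilde X_i]_1-[\zv]_1)K_h(\tilde X_i-\zv)}{1+\lambda([\tilde X_i]_1-[\zv]_1)K_h(\tilde X_i-\zv)}=0$. Expanding the summand in $\lambda$ and combining kernel symmetry (Assumption~\ref{assumption:kernel}\ref{subassumption:kernel-sym}) with a Taylor expansion of $g$ around $\zv$ yields $\tfrac{1}{n}\sum_i([\tilde X_i]_1-[\zv]_1)K_h(\tilde X_i-\zv)=\Oc_p(h^2)+\Oc_p((nh^w)^{-1/2})$, while $\tfrac{1}{n}\sum_i([\tilde X_i]_1-[\zv]_1)^2K_h^2(\tilde X_i-\zv)=h^{-w}g(\zv)\nu_2+o_p(h^{-w})$. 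Inverting then gives $\lambda=\Oc_p(h^{w+2})+\Oc_p(h^{w}(nh^w)^{-1/2})$, small enough that $p_i(\zv)=\tfrac{1}{n}[1-\lambda([\tilde X_i]_1-[\zv]_1)K_h(\tilde X_i-\zv)]+o_p(\cdot)$ is a legitimate first-order expansion. Substituting into \eqref{eq:def_RNW}--\eqref{eq:RNW_weights} produces a ratio of two mean-plus-noise sums, and a standard change of variables $u=(\tilde X_i-\zv)/h$ together with Assumption~\ref{assumption:density_and_CDF}\ref{subassumption:cdf-smooth},\ref{subassumption:density-marginal} and Assumption~\ref{assumption:kernel}\ref{subassumption:kernel-mu} yields $\E[K_h(\tilde X_1-\zv)\mathbf 1(\tilde Y_1\le\tilde y)]=g(\zv)F(\tilde y|\zv)+\tfrac{h^2\mu_2}{2}\tr(D_{\zv}^2(gF)(\zv))+o(h^2)$ and the analogous formula for the denominator. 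After forming the ratio and invoking the $\lambda$-correction, the cross-gradient term $2\nabla_{\zv}F(\tilde y|\zv)^\top\nabla_{\zv}g(\zv)/g(\zv)$ that appears in the bias of the unreweighted Nadaraya--Watson estimator is cancelled, leaving exactly the Hessian-only bias $\tfrac12 h^2\mu_2\tr(D_{\zv}^2 F(\tilde y|\zv))$.

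The main obstacle will be upgrading the variance bounds from the i.i.d.\ case, since each sum $\tfrac{1}{n}\sum_i K_h(\tilde X_i-\zv)\varphi(\tilde Y_i)$ appearing in the expansion must concentrate at rate $(nh^w)^{-1/2}$ despite serial dependence. My plan is to split the covariance sum $\sum_{i\ge 2}\mathrm{Cov}(K_h(\tilde X_1-\zv)\varphi(\tilde Y_1),K_h(\tilde X_i-\zv)\varphi(\tilde Y_i))$ at a cutoff $i_n$: for $i\le i_n$ the bounded-joint-density condition (Assumption~\ref{assumption:density_and_CDF}\ref{subassumption:density-joint}) gives an $\Oc(h^{-w})$ bound per term by the usual uniform-density argument, while for $i>i_n$ Davydov's inequality with exponents tied to the moment index $2+\delta$ yields a bound $\Oc(\alpha(i)^{\delta/(2+\delta)}h^{-2w(2+\delta)/\delta})$. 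Optimizing $i_n$ and using the polynomial rate $\alpha(\tau)=\Oc(\tau^{-(2+\delta)})$ together with the bandwidth restriction $nh^{w(1+2/\delta)}\to\infty$ from Assumption~\ref{assumption:bandwidth} makes the total variance of order $(nh^w)^{-1}$, producing the required $\Oc_p((nh^w)^{-1/2})$ remainder. Assembling the bias and stochastic terms and applying Slutsky's theorem to the ratio delivers \eqref{eq:F_hat_diff_conc}; throughout, the technical details that \citet{salha2006kernel} leaves implicit are filled in by invoking Assumption~\ref{assumption:density_and_CDF}\ref{subassumption:cdf-bounded_away} to keep the ratio bounded away from the boundary $\{0,1\}$ and by verifying the big-block/small-block truncation is compatible with the kernel-support condition in Assumption~\ref{assumption:kernel}.
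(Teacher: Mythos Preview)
Your plan is essentially the paper's own (Cai/Salha) argument: control $\lambda$, linearize the adjustment weights, then do a bias/variance decomposition with a covariance-splitting bound for the dependent-data stochastic term. The paper packages this slightly differently---it replaces $\lambda$ by its deterministic limit $h^{w}c(\zv)$, sets $b_i(\zv)=[1+h^{w}c(\zv)([\tX_i]_1-[\zv]_1)K_h(\tX_i-\zv)]^{-1}$, and decomposes $\widehat F-F$ into $J_1,J_2,J_3$, so that the orthogonality constraint \eqref{eq:weight_constraint_3} kills the first-derivative bias directly in $J_2$ rather than via an explicit cancellation against the Nadaraya--Watson cross-gradient---but this is the same mechanism you describe.

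A few exponent slips to fix. First, $\tfrac{1}{n}\sum_i([\tX_i]_1-[\zv]_1)^2K_h^2(\tX_i-\zv)$ is of order $h^{2-w}$, not $h^{-w}$ (the $(\tX_{i1}-\zv_1)^2$ factor contributes an extra $h^2$ after the change of variables); consequently $\lambda=\Oc_p(h^{w})$, not $\Oc_p(h^{w+2})$, which is what the paper's Lemma on $\lambda$ records. Second, since the kernel is bounded with compact support, the summands $\xi_i$ are bounded by $Ch^{-w/2}$ and the paper simply invokes the Ibragimov--Linnik covariance bound for bounded variables, $|\mathrm{Cov}(\xi_1,\xi_i)|\le Ch^{-w}\alpha(i-1)$; Davydov is unnecessary, and your Davydov exponent $h^{-2w(2+\delta)/\delta}$ is in any case incorrect (with $p=q=2+\delta$ one gets $h^{-2w(1+\delta)/(2+\delta)}$). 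Finally, it is Assumption~\ref{assumption:density_and_CDF}\ref{subassumption:density-marginal} ($g(\zv)>0$), not \ref{assumption:density_and_CDF}\ref{subassumption:cdf-bounded_away}, that keeps the ratio's denominator bounded away from zero. None of these affect the validity of your overall approach.
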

This proposition implies pointwise convergence in probability of the RNW estimator, and since it is the weighted empirical CDF, this pointwise convergence implies uniform convergence in probability \citep[p.127-128]{tucker1967graduate}. Consequently, we obtain the consistency of the conditional quantile estimator in \eqref{eq:RNW-QR} 
to the true conditional quantile given as
\[
    Q_\beta (\tilde x) = \inf \{\tilde y \in \Real: F(\tilde y|\tilde x) \geq \beta \}.
\]

\begin{cor}\label{cor:QR_consistency}
    Under Assumptions \ref{assumption:alpha_mixing}-\ref{assumption:bandwidth}, for every $\beta \in (0,1)$ and $\zv$, as $n \to \infty$,
    \begin{equation}\label{eq:QR_consistency}
        \widehat{Q}_\beta(\zv) \to Q_\beta(\zv) \mb{ in probability}.
    \end{equation}
\end{cor}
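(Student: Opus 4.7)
The plan is to upgrade the pointwise-in-probability CDF consistency supplied by Proposition \ref{Prop:RNW_consistency} to uniform-in-probability consistency on $\Real$, and then invert via a standard CDF-to-quantile argument.

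First, I would observe that because the weights $\widehat W_i(\zv)$ in \eqref{eq:RNW_weights} are nonnegative and sum to one, the map $\tilde y \mapsto \widehat F(\tilde y\mid \zv) = \sum_i \widehat W_i(\zv)\,\1(\tilde Y_i\leq \tilde y)$ is a bona fide (random) distribution function: nondecreasing, right-continuous, with limits $0$ and $1$ at $\mp\infty$. Proposition \ref{Prop:RNW_consistency} yields $\widehat F(\tilde y\mid \zv) \xrightarrow{\P} F(\tilde y\mid\zv)$ for every fixed $\tilde y$, and by Assumption \ref{assumption:density_and_CDF} the limit $F(\cdot\mid\zv)$ is continuous. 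I would then promote this pointwise statement to
\[
    \sup_{\tilde y \in \Real} \abs{\widehat F(\tilde y \mid \zv) - F(\tilde y \mid \zv)} \xrightarrow{\P} 0,
\]
via the classical P\'olya-type argument that strengthens pointwise to uniform convergence whenever the approximating sequence is monotone and the limit is continuous. To accommodate the in-probability mode (rather than a.s.), I would route through the subsequence principle: from any subsequence of $\{n\}$, extract a further subsequence along which pointwise convergence holds almost surely on a countable dense subset of $\Real$; then monotonicity of $\widehat F$ and continuity of $F$ upgrade this to uniform a.s.\ convergence along that subsequence, giving uniform convergence in probability. This is the content alluded to in the excerpt via \citet{tucker1967graduate}.

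Second, I would invert. Fix $\beta \in (0,1)$ and write $q \coloneqq Q_\beta(\zv)$. Under Assumption \ref{assumption:density_and_CDF}, $F(\cdot\mid\zv)$ is continuous at $q$ and, as is standard for the quantile to be a well-defined target, strictly increasing at $q$. Hence for every $\eta > 0$ there exists $\delta = \delta(\eta) > 0$ such that $F(q - \eta \mid \zv) \leq \beta - \delta$ and $F(q + \eta \mid \zv) \geq \beta + \delta$. On the uniform-closeness event $E_\delta := \{\sup_{\tilde y} \abs{\widehat F(\tilde y\mid\zv) - F(\tilde y\mid\zv)} < \delta\}$ we have $\widehat F(q - \eta\mid\zv) < \beta < \widehat F(q + \eta\mid\zv)$, so the definition \eqref{eq:RNW-QR} of $\widehat Q_\beta$ sandwiches $q - \eta \leq \widehat Q_\beta(\zv) \leq q + \eta$. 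Since $\P(E_\delta) \to 1$ by the first step, this delivers \eqref{eq:QR_consistency}.

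The main obstacle will be the passage from pointwise to uniform convergence under the in-probability mode. In the almost-sure regime this is the classical P\'olya lemma, which exploits monotonicity of the approximants and continuity of the limit; under convergence in probability one must recycle that argument through the subsequence principle. A minor subtlety is the behavior at $\mp\infty$, but it causes no trouble because $\widehat F$ and $F$ are both proper CDFs with matching limits there. The inversion step is then routine.
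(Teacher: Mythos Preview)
Your proposal is correct and follows essentially the same two-step route as the paper: first lift Proposition \ref{Prop:RNW_consistency} from pointwise to uniform convergence in probability via the P\'olya/Tucker monotonicity argument (the paper simply cites \citet{tucker1967graduate} for this, whereas you spell out the subsequence-principle justification), and then invert with an $\eps$--$\delta$ sandwich around $Q_\beta(\zv)$ that matches the paper's choice $\delta = \min\{\beta - F(Q_\beta(\zv)-\eps\mid\zv),\, F(Q_\beta(\zv)+\eps\mid\zv)-\beta\}$. One small caveat: Assumption \ref{assumption:density_and_CDF}\ref{subassumption:cdf-smooth} asserts smoothness of $F(\tilde y\mid\zv)$ in $\zv$, not in $\tilde y$, so your appeal to it for continuity of $F(\cdot\mid\zv)$ is misplaced---but the paper is equally implicit on this point (invoking only ``uniqueness of the quantile''), and the argument goes through once continuity/strict increase at $Q_\beta(\zv)$ is granted.
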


As a direct consequence of Corollary \ref{cor:QR_consistency}, the asymptotic conditional coverage of \kowcpi{} is guaranteed by the consistency of the quantile estimator used in our sequential algorithm.
\begin{cor}[Asymptotic conditional coverage guarantee]\label{cor:asymptotic_conditional_coverage}
    Under Assumptions \ref{assumption:alpha_mixing}-\ref{assumption:bandwidth}, for any $\alpha \in (0,1)$, as $n \to \infty$,
    \begin{equation}\label{eq:asymptotic_conditional_coverage}
        \P(Y_t \in \hat{C}_{t-1}^\alpha(X_t) | X_t) \to (1 - \alpha) \mb{ in probability}.
    \end{equation}     
\end{cor}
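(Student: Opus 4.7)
}
The plan is to peel off the conditional coverage probability into (i) the consistency of the RNW quantile estimator, (ii) the convergence of the data-driven minimizer $\hat\beta^{*}$ to a population minimizer $\beta^{*}$, and (iii) the continuity of the conditional CDF $F(\cdot|\zv)$. Write $\zv=\tilde X_{n+1}$ for the current predictor block and recall that by stationarity the residual $\hat\varepsilon_t=Y_t-\hat f(X_t)$ satisfies $\hat\varepsilon_t\mid\zv\sim F(\cdot|\zv)$. Then the conditional coverage rewrites as
\begin{align*}
\P\bigl(Y_t\in\widehat C^\alpha_{t-1}(X_t)\,\big|\,X_t\bigr)
 \;=\; F\bigl(\widehat Q_{1-\alpha+\hat\beta^{*}}(\zv)\,\big|\,\zv\bigr)-F\bigl(\widehat Q_{\hat\beta^{*}}(\zv)\,\big|\,\zv\bigr),
\end{align*}
so it suffices to show the right-hand side tends to $1-\alpha$ in probability.

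First, Corollary \ref{cor:QR_consistency} gives the pointwise convergence $\widehat Q_\beta(\zv)\to Q_\beta(\zv)$ in probability for each fixed $\beta\in(0,1)$. Since $\widehat F(\cdot|\zv)$ is by construction a bona fide (weighted) empirical CDF and $F(\cdot|\zv)$ is continuous and strictly increasing under Assumption \ref{assumption:density_and_CDF} (with $F(y|\zv)\in(0,1)$), a Pólya-type argument upgrades this to uniform convergence in $\beta$ on any compact subset of $(0,1)$, and correspondingly $\beta\mapsto\widehat Q_\beta(\zv)$ converges uniformly on compacta to the continuous map $\beta\mapsto Q_\beta(\zv)$ in probability.

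Second, apply this uniform convergence to the objective $\Phi_n(\beta):=\widehat Q_{1-\alpha+\beta}(\zv)-\widehat Q_\beta(\zv)$ on the compact set $\beta\in[0,\alpha]$; its limit $\Phi(\beta):=Q_{1-\alpha+\beta}(\zv)-Q_\beta(\zv)$ is continuous. A standard argmin continuous-mapping argument (e.g., Theorem 5.7 of van der Vaart's \emph{Asymptotic Statistics}) then gives $\hat\beta^{*}\to\beta^{*}$ in probability, where $\beta^{*}$ is a minimizer of $\Phi$; if $\Phi$ has multiple minimizers, select the smallest one and argue through subsequences. Combining this with the uniform convergence yields, by continuous mapping, $\widehat Q_{\hat\beta^{*}}(\zv)\to Q_{\beta^{*}}(\zv)$ and $\widehat Q_{1-\alpha+\hat\beta^{*}}(\zv)\to Q_{1-\alpha+\beta^{*}}(\zv)$ in probability. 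Plugging back and using continuity of $F(\cdot|\zv)$ together with the identity $F(Q_\beta(\zv)|\zv)=\beta$, we conclude that the displayed coverage probability converges to $(1-\alpha+\beta^{*})-\beta^{*}=1-\alpha$ in probability, as desired.

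The main obstacle is precisely the transition from pointwise consistency (all that Corollary \ref{cor:QR_consistency} literally supplies) to the uniform statement needed for the argmin step. Establishing this requires checking that the RNW estimator $\widehat F(\cdot|\zv)$ is a genuine monotone CDF in its first argument (which is the raison d'\^etre of the weight adjustment in Lemma \ref{lem:p_i}) and that the limit $F(\cdot|\zv)$ is continuous and strictly increasing on the relevant range; both hold under Assumption \ref{assumption:density_and_CDF}. A secondary subtlety is non-uniqueness of $\beta^{*}$ and the possibility that the minimizer sits on the boundary $\{0,\alpha\}$: this is handled by working with the smallest minimizer and passing to subsequences, which does not affect the conclusion since all minimizers give the same coverage $1-\alpha$.
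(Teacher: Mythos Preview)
Your proposal follows essentially the same route as the paper: rewrite the conditional coverage as $F(\widehat Q_{1-\alpha+\beta^*}(\zv)\mid\zv)-F(\widehat Q_{\beta^*}(\zv)\mid\zv)$, invoke the quantile consistency from Corollary~\ref{cor:QR_consistency}, and finish via continuity of $F(\cdot\mid\zv)$ together with $F(Q_\beta(\zv)\mid\zv)=\beta$. The paper's own argument is very terse---it simply cites pointwise consistency of $\widehat Q_\beta$ and the continuous mapping theorem without isolating the data-dependence of $\beta^*$---whereas you are explicit about upgrading to uniform-in-$\beta$ convergence to handle the random $\hat\beta^*$. In that sense your write-up is the more rigorous version of the same proof.

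One simplification worth noting: your step (ii), the argmin continuous-mapping argument giving $\hat\beta^*\to\beta^*$, is more than you need. Since $F(Q_\beta(\zv)\mid\zv)=\beta$ holds for \emph{every} $\beta$, the target value $1-\alpha$ does not depend on where $\hat\beta^*$ lands. Once you have uniform convergence of $\beta\mapsto\widehat Q_\beta(\zv)$ (equivalently, of $\widehat F(\cdot\mid\zv)$, which the paper already records in the proof of Corollary~\ref{cor:QR_consistency}), you can bound $\big|F(\widehat Q_{1-\alpha+\hat\beta^*}(\zv)\mid\zv)-(1-\alpha+\hat\beta^*)\big|$ by $\sup_\beta|F(\widehat Q_\beta(\zv)\mid\zv)-\beta|\to 0$ directly, and likewise for the lower endpoint; subtracting gives $1-\alpha$ with no need to identify a limit $\beta^*$ or address its uniqueness. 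This both shortens the argument and dissolves your boundary/non-uniqueness caveats, and is effectively what the paper's one-line appeal to the continuous mapping theorem is gesturing at.
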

Thus, employing quantile regression using the RNW estimator for prediction residuals derived from the time-series data of continuous random variables, assuming strong mixing of these residuals, \kowcpi{} can achieve approximate conditional coverage with a sufficient number of residuals utilized.

To further specify the rate of convergence, define the {\it discrete gap}
\begin{equation}\label{eq:discrete_gap}
    \Delta(\tilde X) \coloneqq \sup_{\beta \in [0,1]} \vert{\widehat F(\widehat Q_\beta(\tilde X) | \tilde X) - \beta}\vert = \max_{i=1,\ldots,n} \widehat W_i(\tilde X),
\end{equation}
introduced by the quantile estimator being the generalized inverse distribution function.

\begin{thm}[Conditional coverage gap]\label{thm:asymptotic_conditional_coverage}
    Under Assumptions \ref{assumption:alpha_mixing}-\ref{assumption:bandwidth},
    for any $\alpha \in (0,1)$ and $x_t$, as $n \to \infty$,
    \begin{equation}\label{eq:asymptotic_conditional_coverage_gap}
        \abs{\P \left(Y_t \in \widehat{C}_{t-1}^\alpha(x_t) \mid X_t = x_t \right) -  (1 - \alpha)} \leq \Oc_p(h^2 + (nh^w)^{-1/2}) + 2\Delta(\tilde x_{n+1}),
    \end{equation}
    where $\tilde x_{n+1}$ is the realization of $\tilde X_{n+1}$ given $X_t = x_t$.
\end{thm}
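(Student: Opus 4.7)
The plan is to translate the conditional coverage probability into an expression involving the true conditional CDF $F(\cdot \mid \tilde x_{n+1})$ evaluated at the estimated quantiles, and then bound the error through two pieces: the RNW estimation error (Proposition \ref{Prop:RNW_consistency}) and the discrete gap $\Delta(\tilde x_{n+1})$ coming from inverting a weighted empirical CDF.

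First, since $Y_t - \hat f(X_t)$ plays the role of $\tilde Y_{n+1}$, and since conditioning on $X_t = x_t$ fixes the realization $\tilde x_{n+1}$ (which is composed of past residuals and hence measurable with respect to $X_t$), I would write
\begin{align*}
\P\!\left(Y_t \in \widehat C_{t-1}^\alpha(x_t) \mid X_t = x_t\right)
= F\!\left(\widehat Q_{1-\alpha+\beta^*}(\tilde x_{n+1}) \mid \tilde x_{n+1}\right) - F\!\left(\widehat Q_{\beta^*}(\tilde x_{n+1}) \mid \tilde x_{n+1}\right).
\end{align*}
Since $(1-\alpha+\beta^*) - \beta^* = 1-\alpha$, it suffices to bound $|F(\widehat Q_\beta(\tilde x_{n+1})\mid \tilde x_{n+1}) - \beta|$ for each $\beta \in \{\beta^*, 1-\alpha+\beta^*\}$ by $\Oc_p(h^2 + (nh^w)^{-1/2}) + \Delta(\tilde x_{n+1})$; the factor 2 in the conclusion then comes from applying the triangle inequality to the two endpoints.

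Second, I would split the per-endpoint error via the triangle inequality through $\widehat F$:
\begin{align*}
\bigl|F(\widehat Q_\beta(\tilde x_{n+1})\mid\tilde x_{n+1}) - \beta\bigr|
\;\leq\;
\underbrace{\bigl|F(\widehat Q_\beta(\tilde x_{n+1})\mid\tilde x_{n+1}) - \widehat F(\widehat Q_\beta(\tilde x_{n+1})\mid\tilde x_{n+1})\bigr|}_{(\mathrm{I})}
+ \underbrace{\bigl|\widehat F(\widehat Q_\beta(\tilde x_{n+1})\mid\tilde x_{n+1}) - \beta\bigr|}_{(\mathrm{II})}.
\end{align*}
For term (II), because $\widehat F(\cdot \mid \tilde x_{n+1})$ is a weighted empirical CDF with jumps of size $\widehat W_i(\tilde x_{n+1})$ and $\widehat Q_\beta$ is its generalized inverse, the standard inequality $\widehat F(\widehat Q_\beta-\mid\tilde x_{n+1}) \leq \beta \leq \widehat F(\widehat Q_\beta\mid \tilde x_{n+1})$ gives (II) $\leq \max_i \widehat W_i(\tilde x_{n+1}) = \Delta(\tilde x_{n+1})$ by the definition in \eqref{eq:discrete_gap}. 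For term (I), I would invoke Proposition \ref{Prop:RNW_consistency}, upgraded from pointwise to uniform-in-$\tilde y$ convergence (which the paper already notes follows from the weighted empirical CDF structure via the argument in Tucker's book), to obtain
\begin{equation*}
\sup_{\tilde y \in \Real} \bigl|\widehat F(\tilde y \mid \tilde x_{n+1}) - F(\tilde y \mid \tilde x_{n+1})\bigr| = \Oc_p\!\left(h^2 + (nh^w)^{-1/2}\right),
\end{equation*}
and then evaluate this supremum at the (random) point $\widehat Q_\beta(\tilde x_{n+1})$.

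Combining (I) and (II), summing the two-endpoint contributions, and absorbing the factor $2$ into the $\Oc_p(\cdot)$ term (but keeping it explicit on $\Delta$) yields the stated bound. The main subtlety I anticipate is the step from pointwise to uniform convergence of $\widehat F$, and the fact that $\beta^*$ is data-dependent; both issues are handled cleanly by the uniform bound, which holds simultaneously for all arguments of $\widehat F$ and hence for any data-driven choice of $\beta \in [0,\alpha]$. The rest of the argument is a direct triangle-inequality computation and requires no further technical input beyond Proposition \ref{Prop:RNW_consistency} and the definition of $\Delta$.
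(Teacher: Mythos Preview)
Your proposal is correct and follows essentially the same route as the paper's proof: translate the conditional coverage to $F(\widehat Q_{1-\alpha+\beta^*}\mid\tilde x_{n+1}) - F(\widehat Q_{\beta^*}\mid\tilde x_{n+1})$, split each endpoint error through $\widehat F$ via the triangle inequality, bound the $|\widehat F(\widehat Q_\beta)-\beta|$ terms by $\Delta(\tilde x_{n+1})$ from \eqref{eq:discrete_gap}, and bound the $|F-\widehat F|$ terms by the RNW rate from Proposition \ref{Prop:RNW_consistency} (the paper cites the decomposition \eqref{eq:F_hat_decompose} directly rather than passing through the uniform-in-$\tilde y$ upgrade, but your extra care there is harmless).
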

Given that the adjustment weights $p_i(\zv)$ uniformly concentrate to $1/n$ \citep{steikert2014weighted}, one can see that the conditional coverage gap tends to zero, although its precise rate remains an open question.

\section{Experiments}\label{sec:experiments}

In this section, we compare the performance of \kowcpi{} against state-of-the-art conformal prediction baselines using real time-series data. Additional experimental results, not included in this section, using both real and synthetic data, are provided in Appendices \ref{app:add_real_experiments} and \ref{app:syn_data}. We aim to show that \kowcpi{} can consistently reach valid coverage with the narrowest prediction intervals.

\noindent \textit{Dataset.} We consider three real time series from different domains. The first \texttt{ELEC2} data set (electric) \citep{harries1999splice} tracks electricity usage and pricing in the states of New South Wales and Victoria in Australia for every 30 minutes over a 2.5-year period in 1996--1999. The second renewable energy data (solar) \citep{zhang2021solar} are from the National Solar Radiation Database and contain hourly solar radiation data (measured in GHI) from Atlanta in 2018. The third wind speed data (wind) \citep{zhu2021multi} are collected at wind farms operated by MISO in the US. The wind speed record was updated every 15 minutes over a one-week period in September 2020.

\noindent \textit{Baselines.} We consider Sequential Predictive Conformal Inference (SPCI) \citep{xu2023sequential}, Ensemble Prediction Interval (EnbPI) \citep{xu2023conformal}, Adaptive Conformal Inference (ACI) \citep{gibbs2021adaptive}, Aggregated ACI (AgACI) \citep{zaffran2022adaptive}, Fully Adaptive Conformal Inference (FACI) \citep{gibbs2022conformal}, Scale-Free Online Gradient Descent (SF-OGD) \citep{orabona2018scale, bhatnagar2023improved}, Strongly Adaptive Online Conformal Prediction (SAOCP) \citep{bhatnagar2023improved}, and vanilla Split Conformal Prediction (SCP) \citep{vovk2005algorithmic}.
Additionally, we included a comparison where weights were derived from the original Nadaraya-Watson estimator (Plain NW).
For the implementation of ACI-related methods, we utilized the R package AdaptiveConformal (\url{https://github.com/herbps10/AdaptiveConformal}). For SPCI and EnbPI, we used the Python code from \url{https://github.com/hamrel-cxu/SPCI-code}.

\noindent \textit{Setup and evaluation metrics.} In all comparisons, we use the random forest as the base point predictor with the number of trees $= 10$.
Every dataset is split in a 7:1:2 ratio for training the point predictor, tuning the window length $w$ and bandwidth $h$, and constructing prediction intervals, respectively.
The window length for each dataset is fixed and determined through cross-validation, while the bandwidth is selected by minimizing the nonparametric AIC, as detailed in \eqref{eq:nonpara_AIC}.

Besides examining marginal coverage and widths of prediction intervals on test data, we also focus on \textit{rolling coverage}, which is helpful in showing approximate conditional coverage at specific time indices. Given a rolling window size $m > 0$, rolling coverage $\widehat{\rm RC}_t$ at time $t$ is defined as
$
    \widehat{\rm RC}_t = \frac{1}{m} \sum_{i = 1}^{m} \1\{Y_{t-i+1}\in \widehat C_{t-i}^\alpha(X_{t-i+1})\}.%
$

\begin{table}[!t]
  \centering
  \caption{Empirical marginal coverage and average width across three real time-series datasets by different methods. The target coverage is $1 - \alpha = 0.9$. {The values in the bracket are standard deviation across five independent trials.}}
  \label{tab:real_data}
  {%
\small
  \begin{tabular}{lcccccc}
    \toprule
    & \multicolumn{2}{c}{Electric} & \multicolumn{2}{c}{Wind} & \multicolumn{2}{c}{Solar}\\
    \cmidrule(lr){2-3} \cmidrule(lr){4-5} \cmidrule(lr){6-7}
             & Coverage        & Width          & Coverage        & Width          & Coverage        & Width          \\
    \midrule
    \kowcpi{}   & 0.90 (2.3e-3) & 0.22 (1.5e-3) & 0.91 (2.8e-3) & 2.41 (3.2e-2) & 0.90 (1.2e-3) & 48.8 (9.4e-1)\\
    Plain NW    & 0.89 (1.7e-3) & 0.31 (2.2e-3) & 0.95 (7.4e-3) & 3.58 (1.0e-1) & 0.41 (2.7e-3) & 20.1 (1.8e+0)\\
    SPCI        & 0.90 (1.1e-3) & 0.29 (1.9e-3) & 0.94 (1.0e-2) & 2.61 (2.1e-2) & 0.92 (1.7e-3) & 84.2 (1.7e+0)\\
    EnbPI       & 0.93 (3.4e-3) & 0.36 (2.7e-3) & 0.92 (2.3e-3) & 5.25 (4.3e-2) & 0.87 (1.1e-3) & 106.0 (2.3e+0)\\
    ACI         & 0.89 (0.0e-0) & 0.32 (2.0e-3) & 0.88 (0.0e-0) & 8.26 (2.8e-2) & 0.89 (1.0e-3) & 143.9 (2.3e-1)\\
    FACI        & 0.89 (2.5e-3) & 0.28 (1.2e-3) & 0.91 (3.2e-3) & 7.77 (1.7e-1) & 0.89 (0.0e-0) & 141.9 (6.4e-1)\\
    AgACI       & 0.91 (3.1e-3) & 0.30 (2.3e-3) & 0.88 (1.1e-2) & 7.54 (1.2e-1) & 0.90 (2.4e-3) & 144.6 (1.4e+0)\\
    SF-OGD      & 0.79 (5.8e-4) & 0.25 (1.0e-3) & 0.11 (2.6e-3) & 0.29 (7.0e-4) & 0.00 (0.0e-0)    & 0.50 (0.0e-0)\\
    SAOCP       & 0.93 (6.1e-3) & 0.33 (2.4e-3) & 0.76 (1.1e-2) & 4.00 (4.5e-2) & 0.64 (1.9e-3) & 33.5 (7.3e-2)\\
    SCP         & 0.87 (2.8e-3) & 0.30 (5.9e-4) & 0.86 (3.2e-3) & 8.20 (1.5e-2) & 0.89 (1.0e-3) & 142.0 (3.8e-1)\\
    \bottomrule
  \end{tabular}}
\end{table}

\begin{figure}[!t]
    \centering
    \begin{minipage}{0.49\textwidth}
        \includegraphics[width=\linewidth]{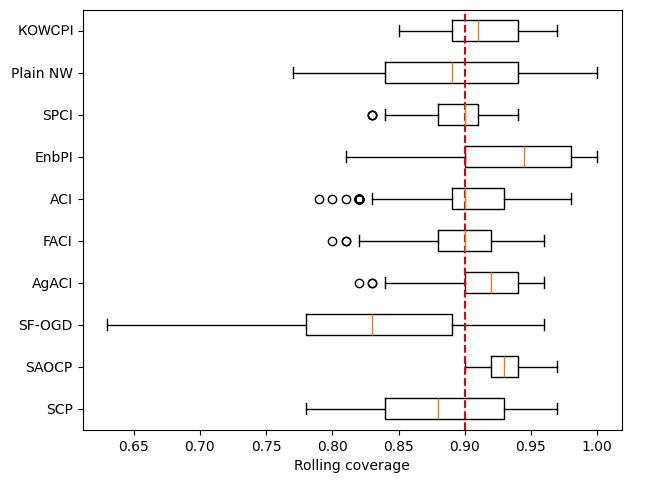}
        \subcaption{Rolling coverage (boxplot)}
        \label{subfig:electric-ma_coverage_boxplot}
    \end{minipage}
    \begin{minipage}{0.49\textwidth}
        \includegraphics[width=\linewidth]{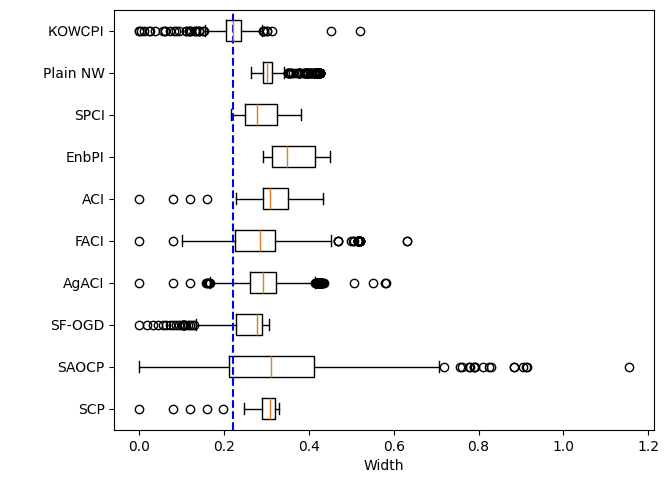}
        \subcaption{Width of prediction intervals (boxplot)}
        \label{subfig:electric-width_boxplot}
    \end{minipage}
    \begin{minipage}{0.49\textwidth}
        \includegraphics[width=\linewidth]{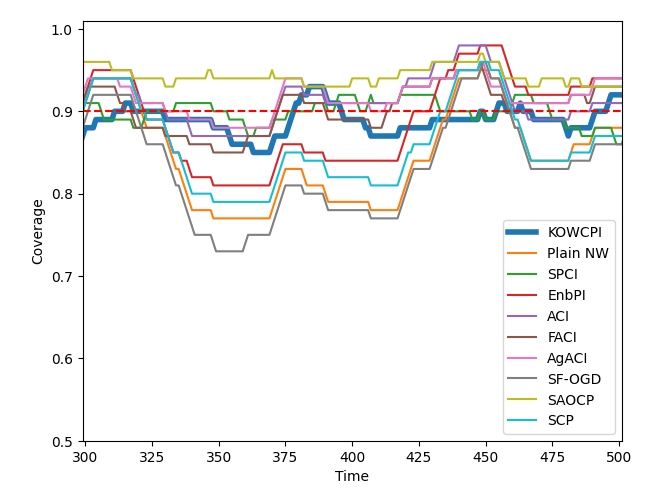}
        \subcaption{Rolling coverage over prediction time indices}
        \label{subfig:electric-ma_coverage}
    \end{minipage}
    \begin{minipage}{0.49\textwidth}
        \includegraphics[width=\linewidth]{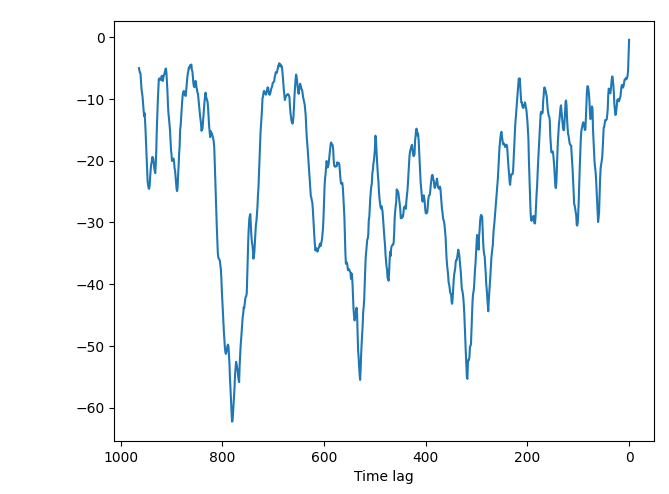}
        \subcaption{$\log(\widehat W_i)$ by the time lag}
        \label{subfig:electric-weights}
    \end{minipage}
    \cprotect \caption{Comparison of empirical rolling coverage and width on the electric dataset by different methods in (a) rolling coverage; (b) widths of intervals, (c) rolling coverage over time, and (d) an instance of computed final weights. The target coverage is 90\%. In (a), the red dotted line is the target coverage and in (b), the blue dotted line is the median width of \kowcpi{}. }
    \label{fig:electric_results}
\end{figure}

\noindent \textit{Results.}
The empirical marginal coverage and width results for all methods are summarized in Table \ref{tab:real_data}. 
The results indicate that \kowcpi{} consistently achieves the 90\% target coverage and maintains the smallest average width compared to the alternative state-of-the-art methods.
While all methods, except SF-OGD, SAOCF, and Plain NW nearly achieve marginal coverage under target $1 - \alpha = 0.9$, \kowcpi{} produces the narrowest average width on all datasets.
In terms of rolling results, we show in Figures \ref{subfig:electric-ma_coverage_boxplot} and \ref{subfig:electric-ma_coverage} that the coverage of \kowcpi{} intervals consistently centers around 90\% throughout the entire test phase. Additionally, Figure \ref{subfig:electric-width_boxplot} shows that \kowcpi{} intervals are also significantly narrower with a smaller variance than the baselines.

{Lastly, Figure \ref{subfig:electric-weights} 
depicts the weights $\widehat{W}$ (in log scale) assigned by the RNW estimator at the first time index of the test data. Notably, the most recent set of non-conformity scores (in terms of time indices) is assigned the heaviest weights, which aligns intuitively as these are the most similar to the first test datum in terms of temporal proximity. We believe that this heavy weighting of recent residuals contributes significantly to \kowcpi{}’s performance. Datasets where \kowcpi{} demonstrates significant superiority, such as the Solar dataset, typically exhibit active volatility changes. In these cases, \kowcpi{} adapts quickly to changing conditions by leveraging the high weights assigned to recent residuals. For instance, in Figure \ref{fig:solar_path}, which visualizes the performance of \kowcpi{}, SPCI, and ACI on the Solar dataset, \kowcpi{} dynamically adjusts its interval widths to reflect whether it is in a high or low-volatility region. This adaptive behavior allows \kowcpi{} to avoid over-coverage and maintain narrower average widths compared to methods like SPCI and ACI, which produce intervals with relatively constant widths across all regions. At the same time, we acknowledge that such fast-adapting behavior, avoiding conservative intervals, can occasionally lead to brief coverage failures in some regions due to the aggressive adaptation to rapidly changing conditions.}

{In Appendix \ref{app:exp_details}, we show additional comparisons of \kowcpi{} against the baselines on the other two datasets in terms of rolling results. See Appendices \ref{app:stock} and \ref{app:syn_data} for additional experimental results using a wider variety of real and synthetic datasets, where we consistently observe the coverage validity of \kowcpi{} while yielding the shortest intervals on average.}

\section{Conclusion}

In this paper, we introduced \kowcpi{}, a method to sequentially construct prediction intervals for time-series data. By incorporating the classical Reweighted Nadaraya-Watson estimator into the weighted conformal prediction framework, \kowcpi{} effectively adapts to the dependent structure of time-series data by utilizing data-driven adaptive weights.
Our theoretical contributions include providing theoretical guarantees for the asymptotic conditional coverage of \kowcpi{} under strong mixing conditions and bounding the marginal and conditional coverage gaps. Empirical validation on real-world time-series datasets demonstrated the effectiveness of \kowcpi{} compared to state-of-the-art methods, achieving narrower prediction intervals without compromising empirical coverage.

Future work could explore \textit{adaptive window selection}, where the size of the non-conformity score batch is adjusted dynamically to capture shifts in the underlying distribution. {A preliminary implementation of this approach is discussed in Appendix \ref{app:adaptive}, showcasing its potential to improve flexibility and adaptability in practice.}
Additionally, the natural compatibility of kernel regression with multivariate data can be leveraged to expand the utility of \kowcpi{} for multivariate time-series data, as detailed in Appendix \ref{app:multivar_response}.
There is also potential for improving theoretical guarantees and practical performance by designing alternative non-conformity scores.

\bibliography{bibli}
\bibliographystyle{iclr2025_conference}

\clearpage
\appendix
\setcounter{table}{0}
\setcounter{figure}{0}
\setcounter{equation}{0}

\renewcommand{\thetable}{A.\arabic{table}}
\renewcommand{\thefigure}{A.\arabic{figure}}
\renewcommand{\theequation}{A.\arabic{equation}}
\renewcommand{\thealgorithm}{A.\arabic{algorithm}}

\section{Multivariate time series}\label{app:multivar_response}

In the main text, our discussion has centered on cases where the response variables $Y_t$ are scalars. Here, we explore the natural extension of our methodology to handle scenarios with multivariate responses. This extension requires defining multivariate quantiles, introducing a multivariate version of the RNW estimator for estimating these quantiles \citep{salha2006kernel}, and adapting our \kowcpi{} method for multivariate responses.

\paragraph{Multivariate conditional quantiles} 
Consider a strongly mixing stationary process $((\tX_i, \tY_i))_{i=1}^\infty$, which is a realization of random variable $(\tX, \tY) \in \Real^p \times \Real^s$.
Following \citet{abdous1992note}, we first define a pseudo-norm function $\norm{\cdot}_{2, \alpha}: \Real^s \to \Real$ for $\alpha \in (0,1)$ as
\[
    \norm{v}_{2, \alpha} = \norm{\left(\frac{\abs{v_1} + (2\alpha - 1)}{2}, \ldots, \frac{\abs{v_s} + (2\alpha - 1)}{2}\right)}_2,
\]
for $v \in \Real^s$, where $\norm{\cdot}_2$ is the Euclidean norm on $\Real^s$. Let
\[
    H_\alpha(\theta, \tilde x) \coloneqq \E[\Vert \tilde Y - \theta \Vert_{2, \alpha} - \Vert \tilde Y \Vert_{2, \alpha} \mid \tilde X = \tilde x].
\]
\begin{defn}[Multivariate conditional quantile \citep{abdous1992note}]\label{def:multiv_cond_quan}
    Define a multivariate conditional $\alpha$-quantile $\theta_\alpha(\tilde x)$ for $\alpha \in (0,1)$ as
    \begin{equation}\label{eq:multiv_cond_quan}
        \theta_\alpha(\tilde x) = \argmin_{\theta \in \Real^s} H_\alpha(\theta, \tilde x).
    \end{equation}
\end{defn}

\begin{remark}[Compatibility with univariate quantile function]
    For a scalar $\tilde Y \in \Real$, its conditional quantile given $\tilde X = \tilde x$ is
\begin{align*}
    \theta_\alpha(\tilde x) &= \argmin_{\theta \in \Real} \E\left[\Vert{\tY - \theta}\Vert_{2, \alpha} \mid \tilde X = \tilde x\right]\\
    &= \argmin_{\theta \in \Real} \E\left[\vert{\tilde Y - \theta}\vert + (2\alpha - 1)(\tilde Y - \theta) \mid \tilde X = \tilde x\right]\\
    &= \argmin_{\theta \in \Real} \E((\tilde Y - \theta)(\alpha - \1(\tilde Y \leq \theta)) \mid \tilde X = \tilde x)\\
    &= Q_\alpha(\tilde x),
\end{align*}
for any $\alpha \in (0, 1)$. Thus, Definition \ref{def:multiv_cond_quan} is consistent with the univariate case.
\end{remark}

\paragraph{Multivariate RNW estimator} Following the definition in \eqref{eq:def_RNW}, we obtain the RNW estimator for multivariate responses as
\begin{equation}\label{eq:def_WNW_multiv}
    \widehat{F}(\tilde y | \tilde x) = \frac{\sum_{i=1}^n W(\frac{\tX_i - \tilde x}{h})\1(\tY_i \le \tilde y)}{\sum_{i=1}^n W(\frac{\tX_i - \tilde x}{h})},
\end{equation}
where, according to \eqref{eq:RNW_weights} and \eqref{eq:p_i},
\[
    W(u) = \frac{K(u)}{1 + \lambda u_1 K(u)},
\]
for $u = (u_1, \ldots, u_p)^\top$.
Now, let $W_h(u) = h^{-p} W(h^{-1}u)$, and define an estimator for $H_\alpha(\theta, x)$ as
\[
    \widehat{H}_\alpha(\theta, \tilde x) \coloneqq \int_{\Real^s} (\Vert{\tilde y - \theta}\Vert_{2, \alpha} - \Vert{\tilde y}\Vert_{2, \alpha}) \widehat{F}(d \tilde y | \tilde x) = \frac{\sum_{i=1}^n W_h(\tilde X_i - \tilde x) \left(\Vert{\tilde Y_i - \theta}\Vert_{2, \alpha} - \Vert{\tilde Y_i}\Vert_{2, \alpha}\right)}{\sum_{i=1}^n W_h(\tilde X_i - \tilde x)},
\]
and consequently the RNW conditional quantile estimator $\widehat{\theta}_\alpha$ as
\begin{align}\label{eq:multiv_cond_quant_est}
\begin{split}
    \widehat{\theta}_\alpha(\tilde x) \coloneqq \argmin_{\theta \in \Real^p} \hat{H}_\alpha(\theta, \tilde x) = \argmin_{\theta \in \Real^p} \sum_{i=1}^n W_h(\tX_i - \tilde x) (\Vert{\tY_i - \theta}\Vert_{2, \alpha} - \Vert{\tY_i}\Vert_{2, \alpha}).
\end{split}
\end{align}

\paragraph{Multivariate \protect\kowcpi{}} Suppose we are sequentially observing $(X_t, Y_t) \in \Real^d \times \Real^s$, $t = 1, 2, \ldots$. Based on the construction of the multivariate version of the RNW estimator, we can extend our \kowcpi{} approach to multivariate responses in the same manner as described in Algorithm \ref{alg:kowcpi}, with multivariate residuals
\[
    \hat \eps_t = Y_t - \hat f(X_t) \in \Real^s,
\]
as non-conformity scores.
This adaptation allows for the application of our methodology to a broader range of data scenarios involving dependent data with multivariate response variables, which were similarly studied in \citep{xu2024conformal, sun2024copula, stankeviciute2021conformal}.

\section{Proofs}\label{app:proof}

The following lemma is adapted from the proof of Lemma 1 of \citet{tibshirani2019conformal}; however, we do not assume exchangeability. 
\begin{lem}[Weights on quantile for non-exchangeable data]\label{lem:weights_nonex}
Given a sequence of random variables $\{V_1, \ldots, V_{n+1}\}$ with joint density $\Pc$ and a sequence of observations $\{v_1, \ldots, v_{n+1}\}$. Define the event
\[
E = \{\{V_1, \ldots, V_{n+1}\} =\{v_1, \ldots, v_{n+1}\} \}.
\]Then we have for $i = 1, \ldots, n+1$,
\[\P\{V_{n+1} = v_i|E\} =
\frac{\sum_{\sigma: \sigma(n+1) = i} \Pc(v_{\sigma(1)}, \ldots, v_{\sigma(n+1)})}{\sum_\sigma \Pc (v_{\sigma(1)}, \ldots, v_{\sigma(n+1)})} \in [0, 1].\]
\end{lem}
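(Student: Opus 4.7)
The plan is to reduce the conditional probability $\mathbb{P}(V_{n+1}=v_i \mid E)$ to a finite sum over the symmetric group $S_{n+1}$ by working at the joint-density level. Although the underlying distribution is continuous (so every singleton event $\{V_{n+1} = v_i\}$ has measure zero), the event $E$ can be treated via the regular conditional distribution of the ordered tuple $(V_1,\ldots,V_{n+1})$ given the random unordered multiset $\{V_1,\ldots,V_{n+1}\}$. Because $\mathcal{P}$ is a density on $\mathbb{R}^{n+1}$, the coordinates $V_1,\ldots,V_{n+1}$ are pairwise distinct almost surely, so on a full-measure set the multiset uniquely encodes the $(n+1)!$ orderings that could have produced it.

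\textbf{Key steps.} First, I would restrict to the event that $v_1,\ldots,v_{n+1}$ are distinct and observe that on this event $E$ is the disjoint union, indexed by $\sigma \in S_{n+1}$, of the density-level events $\{(V_1,\ldots,V_{n+1}) = (v_{\sigma(1)},\ldots,v_{\sigma(n+1)})\}$. Second, I would identify the regular conditional law of the ordered tuple given $E$: it is a discrete distribution on the $(n+1)!$ orderings of $(v_1,\ldots,v_{n+1})$ with mass
\[
\mathbb{P}\bigl\{(V_1,\ldots,V_{n+1}) = (v_{\sigma(1)},\ldots,v_{\sigma(n+1)}) \,\big|\, E\bigr\}
= \frac{\mathcal{P}(v_{\sigma(1)},\ldots,v_{\sigma(n+1)})}{\sum_{\tau \in S_{n+1}} \mathcal{P}(v_{\tau(1)},\ldots,v_{\tau(n+1)})},
\]
since the combinatorial symmetrization factor appearing in the marginal density of the unordered multiset cancels between numerator and denominator. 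Third, I would marginalize over the subset of permutations $\{\sigma : \sigma(n+1) = i\}$, which is exactly the collection of orderings that assign the value $v_i$ to the coordinate $V_{n+1}$, to obtain the claimed formula. The bound in $[0,1]$ is immediate because each summand in the numerator is nonnegative and appears in the denominator as well.

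\textbf{Main obstacle.} The delicate part is rigorously justifying the density-level Bayes calculation when the conditioning event $\{V_{n+1} = v_i\} \cap E$ has zero Lebesgue measure. I would handle this by passing to the quotient space $\mathbb{R}^{n+1}/S_{n+1}$ of unordered tuples and applying the disintegration theorem: the pushforward of $\mathcal{P}$ to this quotient admits a density proportional to $\sum_\tau \mathcal{P}(v_{\tau(1)},\ldots,v_{\tau(n+1)})$, and the conditional law of the original ordered tuple given its image is exactly the discrete law above. Once this is in place, the final step is routine marginalization. A useful check is that if $\mathcal{P}$ is symmetric (the exchangeable case), then every term in numerator and denominator equals $\mathcal{P}(v_1,\ldots,v_{n+1})$, and the weight collapses to $n!/(n+1)! = 1/(n+1)$, recovering \citet[Lemma~1]{tibshirani2019conformal}.
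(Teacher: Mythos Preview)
The paper does not actually supply a proof of this lemma: it merely states the result, notes that it is adapted from the proof of Lemma~1 in \citet{tibshirani2019conformal} (dropping the exchangeability hypothesis), and immediately moves on to the proof of Proposition~\ref{prop:marginal_coverage}. Your proposal is correct and is precisely the argument that underlies the cited reference---partition $E$ into the $(n+1)!$ orderings, compute the conditional law of the ordered tuple given the unordered multiset via the density ratio, and sum over $\{\sigma:\sigma(n+1)=i\}$. Your added care with the disintegration theorem and the sanity check recovering $1/(n+1)$ in the exchangeable case are both appropriate; there is nothing further to compare against in the paper itself.
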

Note that when the residuals are exchangeable, $W_i^* = 1/(n+1)$, as also observed in \citet{tibshirani2019conformal}. Now we prove Proposition \ref{prop:marginal_coverage}.
\begin{proof}[Proof of Proposition \ref{prop:marginal_coverage}]

    The proof assumes that $\tilde Y_i$, for $i = 1, \ldots, n+1$, are almost surely distinct. However, the proof remains valid, albeit with more complex notations involving multisets, if this is not the case. Denote
    by ${\rm Quantile}_\beta(\mathbb Q)$ the $\beta$-quantile of the distribution $\mathbb Q$ on $\Real$, and
    by $\delta_a$ the point mass distribution at $a \in \Real$.
    Define the event
    $E = \{\{\tilde Y_1, \ldots, \tilde Y_{n+1}\} = \{v_1, \ldots, v_{n+1}\}\}$. Then, by the tower property, we have
    \begin{align*}
        &\P(Y_{T+1} \in \widehat C_T^\alpha(X_{T+1}))\\
        &= \E(\P(Y_{T+1} \in \widehat C_T^\alpha(X_{T+1}) \mid E))\\
        &= \left. \E \left[ \P \left( {\rm Quantile}_{\beta^*} \left(\sum_{i=1}^{n} \widehat W_i \delta_{v_i}\right) \le \tilde Y_{n+1} \le {\rm Quantile}_{1-\alpha+{\beta^*}} \left(\sum_{i=1}^{n} \widehat W_i \delta_{v_i}\right) \right\vert E \right) \right]\\
        &= \E \left[ \P_{V \sim P^{W^*}} \left( {\rm Quantile}_{\beta^*} \left(\sum_{i=1}^{n} \widehat W_i \delta_{v_i} \right) \le V \le {\rm Quantile}_{1-\alpha+{\beta^*}} \left(\sum_{i=1}^{n} \widehat W_i \delta_{v_i}\right) \right) \right],
    \end{align*}
    where $P^{W^*} = \sum_{i=1}^{n+1} W^*_i \delta_{v_i}$, and in the last line, we have used the result from Lemma \ref{lem:weights_nonex},
    \[
        \tilde Y_{n+1} | E \sim P^{W^*}.
    \]
    Denote the weighted empirical distributions based on $\widehat W = \widehat W(\tX_{n+1})$ as
    \begin{align*}
        P^{\widehat W} &= \sum_{i=1}^{n} \widehat W_i \delta_{v_i}.
    \end{align*}

    This gives the marginal coverage gap as
    \begin{align*}
        &\abs{\P(Y_{n+1} \in \widehat C_n^\alpha(X_{n+1})) - (1 - \alpha)}\\
        &\le \E \left[ \left\vert \P_{V \sim P^{W^*}} \Bigg( {\rm Quantile}_{\beta^*} \left(\sum_{i=1}^n \widehat W_i \delta_{v_i}\right) \le V \le {\rm Quantile}_{1-\alpha+{\beta^*}} \left(\sum_{i=1}^n \widehat W_i \delta_{v_i}\right) \Bigg) \right. \right.\\
        &\quad \quad \left. - \left. \P_{V \sim P^{\widehat W}} \left( {\rm Quantile}_{\beta^*} \left(\sum_{i=1}^n \widehat W_i \delta_{v_i}\right) \le V \le {\rm Quantile}_{1-\alpha+{\beta^*}} \left(\sum_{i=1}^n \widehat W_i \delta_{v_i}\right) \right) \right\vert \right]\\
        &\quad + \E \left\vert \P_{V \sim P^{\widehat W}} \left( {\rm Quantile}_{\beta^*} \left(\sum_{i=1}^n \widehat W_i \delta_{v_i}\right) \le V \le {\rm Quantile}_{1-\alpha+{\beta^*}} \left(\sum_{i=1}^n \widehat W_i \delta_{v_i}\right) \right) - (1-\alpha) \right\vert\\
        &\le \E [d_{\rm TV}(P^{W^*}, P^{\widehat W})]\\
        &\quad + \E \abs{\widehat F \left( \widehat{Q}_{1 - \alpha + {\beta^*}}(\tilde X_{n+1}) \Big| \tilde X_{n+1} \right) - (1 - \alpha + {\beta^*})} + \E \abs{\widehat F \left( \widehat{Q}_{{\beta^*}}(\tilde X_{n+1}) \Big| \tilde X_{n+1} \right) -{\beta^*}}\\
        &\le \frac{1}{2} \E \Vert (W^*)_{1:n} - \widehat W \Vert_1 + \frac{1}{2} \E W^*_{n+1} + 2 \E \max_{i=1,\ldots,n} \widehat W_i(\tilde X_{n+1}),
    \end{align*}
    where we denote by $d_{\rm TV}(\cdot,\cdot)$ the total variation distance between probability measures, and the second inequality is due to the definition of the total variation distance. 
\end{proof}

\subsection{Proof of asymptotic conditional coverage of \protect\kowcpi{} (Corollary \ref{cor:asymptotic_conditional_coverage} and Theorem \ref{thm:asymptotic_conditional_coverage})}\label{app_proof:asymptotic_conditional_coverage}
In deriving the asymptotic conditional coverage property of \kowcpi{}, the consistency of the RNW estimator plays a crucial role. Therefore, we first introduce the proof of Proposition \ref{Prop:RNW_consistency}, which discusses the consistency of the CDF estimator. Corollary \ref{cor:QR_consistency}, which states the consistency of the quantile estimator, is a natural consequence of Proposition \ref{Prop:RNW_consistency} and leads us to the proof for our main results, Corollary \ref{cor:asymptotic_conditional_coverage} and Theorem \ref{thm:asymptotic_conditional_coverage}. Proof of Proposition \ref{Prop:RNW_consistency} adopts the similar strategy as \citet{salha2006kernel} and \citet{cai2002regression}.

To prove Proposition \ref{Prop:RNW_consistency}, it is essential to first understand the nature of the adjustment weight $p_i(\zv)$. Thus, Lemma \ref{lem:p_i} is not only crucial for the practical implementation of the RNW estimator but also indispensable in the proof process of Proposition \ref{Prop:RNW_consistency}.

\begin{proof}[Proof of Lemma \ref{lem:p_i}]
    For display purposes, denote $[X]_1$ as $X_1$. By \eqref{eq:weight_constraint_3}, we have that
    \begin{equation}\label{eq:weight_first_comp_cond}
        \sum_{i=1}^{n} p_i(\zv) (\tilde X_{i1} - \tilde x_1) K_h(\tX_{i} - \zv) = 0.
    \end{equation}
    Let
    \begin{align*}
        &\Lc(\lambda_1, \lambda_2, p_1(\zv), \ldots, p_{n}(\zv))\\
        &= \sum_{i=1}^{n} \log p_i(\zv) + \lambda_1 \Bigg( 1 - \sum_{i=1}^{n} p_i(\zv) \Bigg) + \lambda_2 \sum_{i=1}^{n} p_i(\zv) (\tilde X_{i1} - \tilde x_1) K_h(\tX_{i} - \zv),
    \end{align*}
    where $\lambda_1, \lambda_2 \in \Real$ are the Lagrange multipliers. From $\partial \Lc / \partial p_i(\zv) = 0$ for $i = 1, \ldots, n$, we get
    \begin{align*}
        p_i^{-1}(\zv) - \lambda_1 + \lambda_2(\tilde X_{i1} - \tilde x_1) K_h(\tX_{i} - \zv) = 0,
    \end{align*}
    Since $p_i(\zv)$'s sum up to 1 as in \eqref{eq:weight_constraint_2}, letting $\lambda = -\lambda_2 / \lambda_1$, we have
    \[
        p_i(\zv) = \frac{[1 + \lambda (\tilde X_{i1} - \tilde x_1) K_h(\tX_{i} - \zv)]^{-1}}{\sum_{j=1}^{n} [1 + \lambda (\tilde X_{j1} - \tilde x_1) K_h(\tX_{j} - \zv)]^{-1}}.
    \]
    Using \eqref{eq:weight_constraint_2} again with \eqref{eq:weight_first_comp_cond}, this gives
    \begin{align*}
        \sum_{j=1}^{n} \left[1 + \lambda (\tilde X_{j1} - \tilde x_1)K_h(\tXv_{j} - \zv) \right]^{-1}
        = n \Bigg( \sum_{i=1}^{n} p_i(\zv) [1 + \lambda (\tilde X_{i1} - \tilde x_1)K_h(\tX_{i} - \zv)] \Bigg)^{-1}
        = n,
    \end{align*}
    and therefore \eqref{eq:p_i} holds. With \eqref{eq:weight_constraint_3}, this gives
    \[
        0 = \sum_{i=1}^{n} \frac{(\tilde X_{i1} - \tilde x_1) K_h(\tX_{i} - \zv)}{1 + \lambda (\tilde X_{i1} - \tilde x_1) K_h(\tX_{i} - \zv)} = \left. -\frac{\partial L(\gamma; \zv)}{\partial \gamma} \right\vert_{\lambda}.
    \]
    Note that $\frac{\partial^2 L(\gamma; \zv)}{\partial {\gamma}^2} \ge 0$, implying that $L(\cdot; \zv)$ is indeed a convex function.
\end{proof}

\begin{lem}\label{lem:lambda_characterize}
    Under the assumptions of Proposition \ref{Prop:RNW_consistency}, define
    \[
        c(\zv) = \frac{\frac{\partial g(\zv)}{\partial \tilde x_1} \mu_2}{g(\zv) \nu_2}.
    \]
    Then,
    \begin{equation}\label{eq:lambda_characterize}
        \lambda = h^w \cdot c(\zv) (1 + o_p(1)) = \Oc_p(h^w).
    \end{equation}
\end{lem}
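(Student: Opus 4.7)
The starting point is the first-order optimality condition for $\lambda$ from Lemma \ref{lem:p_i}: writing $U_i \coloneqq ([\tilde X_i]_1 - [\zv]_1)K_h(\tilde X_i - \zv)$, the minimizer satisfies
$$\sum_{i=1}^n \frac{U_i}{1 + \lambda U_i} = 0.$$
I rescale by setting $\tilde\lambda \coloneqq \lambda/h^w$ and aim to show that $\tilde\lambda \to c(\zv)$ in probability, which immediately yields \eqref{eq:lambda_characterize}. Because $K$ has compact support, $K_h(\tilde X_i - \zv)$ is supported on a ball of radius $\Oc(h)$ around $\zv$ and is of order $h^{-w}$ there, so $U_i = \Oc(h^{1-w})$ on its support and $h^w \tilde\lambda U_i = \Oc(h\tilde\lambda)$ is $o(1)$ whenever $\tilde\lambda$ stays bounded. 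This will justify the Taylor expansion $(1+\lambda U_i)^{-1} = 1 - \lambda U_i + \lambda^2 U_i^2(1+\lambda U_i)^{-1}$ inside the first-order condition, giving the formal approximation $\lambda \approx (\sum U_i)/(\sum U_i^2)$ plus a remainder to be controlled.

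Next I compute the deterministic scale of the two key sums. Substituting $u = (\tilde X_i - \zv)/h$ and Taylor-expanding the marginal density $g$ near $\zv$ using Assumption \ref{assumption:density_and_CDF}\ref{subassumption:density-marginal}, together with the kernel moment identities from Assumption \ref{assumption:kernel} ($\int uK(u)\,du = 0$, $\int uu^\top K(u)\,du = \mu_2 I$, $\int uu^\top K^2(u)\,du = \nu_2 I$), I obtain
\begin{align*}
\E[U_i] &= h^2 \mu_2 \frac{\partial g(\zv)}{\partial \tilde x_1} + o(h^2),\\
\E[U_i^2] &= h^{2-w}\, g(\zv)\, \nu_2 + o(h^{2-w}).
\end{align*}
The ratio $\E[U_i]/\E[U_i^2]$ is exactly $h^w c(\zv)$, matching the claim.

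The main obstacle is passing from expectations to sample averages while preserving the delicate scales $h^2$ and $h^{2-w}$ that carry the signal. I will apply a Davydov-type covariance inequality to bound $\Var(n^{-1}\sum U_i)$ and $\Var(n^{-1}\sum U_i^2)$, invoking the strong mixing Assumption \ref{assumption:alpha_mixing} and using Assumption \ref{assumption:density_and_CDF}\ref{subassumption:density-joint} to control the cross-pair contributions via the bound on $|g_{1,i} - g \otimes g|$. The mixing rate $\alpha(\tau) = \Oc(\tau^{-(2+\delta)})$, balanced against the bandwidth condition $nh^{w(1+2/\delta)} \to \infty$ from Assumption \ref{assumption:bandwidth}, will deliver
\begin{align*}
\frac{1}{nh^2}\sum_{i=1}^n U_i &= \mu_2 \frac{\partial g(\zv)}{\partial \tilde x_1}(1 + o_p(1)),\\
\frac{1}{nh^{2-w}}\sum_{i=1}^n U_i^2 &= g(\zv)\, \nu_2\, (1 + o_p(1)).
\end{align*}

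To rigorously conclude that $\tilde\lambda \to c(\zv)$, I will exploit the convexity of $L(\,\cdot\,;\zv)$ already observed in the proof of Lemma \ref{lem:p_i}. Using the expansions above, I evaluate $\partial L(\gamma;\zv)/\partial\gamma$ at the two endpoints $\gamma = h^w(c(\zv) \pm \varepsilon)$ and show that, with probability tending to one, the derivative has opposite signs there. By convexity, the unique minimizer $\lambda$ must lie in the interval $\bigl(h^w(c(\zv) - \varepsilon),\, h^w(c(\zv) + \varepsilon)\bigr)$; sending $\varepsilon \to 0$ gives $\tilde\lambda \to c(\zv)$ in probability, hence $\lambda = h^w c(\zv)(1 + o_p(1)) = \Oc_p(h^w)$, establishing \eqref{eq:lambda_characterize}.
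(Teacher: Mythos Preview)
Your proposal is correct and shares the same moment-matching core as the paper: both hinge on showing $\overline{U} := n^{-1}\sum_i U_i = h^2\mu_2\,\partial_1 g(\zv)(1+o_p(1))$ and $\overline{U^2} := n^{-1}\sum_i U_i^2 = h^{2-w}g(\zv)\nu_2(1+o_p(1))$, so that the first-order condition forces $\lambda \approx \overline{U}/\overline{U^2} = h^w c(\zv)$. Your expectation computations via the substitution $u=(\tX_i-\zv)/h$ and the kernel moment identities match the paper's verbatim.

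The route diverges in how $\lambda$ is localized. The paper uses the classical empirical-likelihood device: from $0 = n^{-1}\sum U_i/(1+\lambda U_i)$ together with $(1+\lambda U_i)^{-1}\ge (1+|\lambda|\max_i|U_i|)^{-1}$ one extracts the a~priori inequality
\[
|\lambda|\ \le\ \frac{|\overline{U}|}{\overline{U^2}-(\max_i|U_i|)\,|\overline{U}|},
\]
which immediately gives $|\lambda|=\Oc_p(h^w)$ and then, feeding this back, the exact constant. You instead evaluate $\partial_\gamma L$ at the two brackets $\gamma = h^w(c(\zv)\pm\varepsilon)$ and invoke convexity of $L$ to trap the minimizer. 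Both arguments are valid; the paper's inequality is more direct in that one line yields the $\Oc_p(h^w)$ bound without first having to argue tightness of $\tilde\lambda$, whereas your bracketing makes the sharp constant transparent from the outset and sidesteps the slight imprecision in the paper's claim that $S_i$ is bounded by an $h$-free constant (this holds only for $w=1$; in general $|U_i|=\Oc(h^{1-w})$, exactly as you note). Your explicit treatment of the concentration step via mixing-covariance bounds is also more careful than the paper, which computes only the expectations and leaves the passage to sample averages implicit.
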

\begin{proof}
    Let
    \[
        S_i = (\tilde X_{i1} - \tilde x_1)K_h(\tX_{i} - \zv).
    \]
    Then, by Assumption \ref{assumption:kernel}, $S_i$ is bounded above by some constant $C_1$.
    Let
    \[
        \overline{S^k} = \frac{1}{n} \sum_{i=1}^{n} (S_i)^k,
    \]
    for $k= 1, 2$. Then, from \eqref{eq:weight_first_comp_cond}, we have
    \begin{align*}
        0 &= \frac{1}{n} \sum_{i=1}^{n} (1+\lambda S_i)^{-1} S_i \ge \abs{\lambda} \abs{\frac{1}{n} \sum_{i=1}^{n} S_i^2 (1 + \lambda S_i)^{-1}} - \abs{\overline{S^1}} \ge \abs{\lambda} (1 + C_1 \abs{\lambda})^{-1} \overline{S^2} - \abs{\overline{S^1}},
    \end{align*}
    which gives
    \[
        \abs{\lambda} \le \frac{\abs{\overline{S^1}}}{\overline{S^2} - C_1 \abs{\overline{S^1}}}.
    \]
    Using the Taylor expansion \citep{wand1994kernel}, we obtain that
    \begin{align*}
        \E \overline{S^1} &= \int_{\Real^w} (u_1 - \tilde x_1) K_h(\uv - \zv) g(\uv) d\uv\\
        &= h \int_{\Real^w} u_1 K(\uv) g(\zv + h \uv) d\uv\\
        &= h \int_{\Real^w} u_1 K(\uv) \Bigg(g(\zv) + h \sum_{j=1}^w u_j \frac{\partial g(\zv)}{\partial \tilde x_j} \Bigg) d\uv + o(h^2)\\
        &= h^2 \left\{ \frac{\partial g(\zv)}{\partial \tilde x_1} \mu_2 + o_p(1) \right\},
    \end{align*}
    where the last equation comes from Assumptions \ref{assumption:kernel}-\ref{subassumption:kernel-sym} and \ref{subassumption:kernel-mu}.
    With a similar argument, we can derive that
    \begin{align*}
        \E \overline{S^2} &= \int_{\Real^w} (u_1 - \tilde x_1)^2 K_h^2(\uv - \zv) g(\uv) d\uv = h^{-w+2} \left\{ g(\zv) \nu_2 + o_p(h) \right\},
    \end{align*}
    using Assumption \ref{assumption:kernel}-\ref{subassumption:kernel-nu}. Therefore, we obtain \eqref{eq:lambda_characterize}.
\end{proof}

Decomposing $\widehat{F}(\tilde y|\zv) - F(\tilde y|\zv)$ in bias and variance terms, we get
\begin{align}\label{eq:bias_variance_decomposition}
\begin{split}
    &\widehat{F}(\tilde y|\zv) - F(\tilde y|\zv)\\
    &= \frac{\sum_{i=1}^{n} p_i(\zv)K_h(\tX_{i} - \zv) \{ \1(\tY_{i}<\tilde y) - F(\tilde y|\zv) \} }{\sum_{i=1}^{n} p_i(\zv)K_h(\tX_{i} - \zv)}\\
    &= \frac{\sum_{i=1}^{n} p_i(\zv)K_h(\tX_{i} - \zv) \delta_i  + \sum_{i=1}^{n} p_i(\zv)K_h(\tX_{i} - \zv) \{ F(\tilde y|\tX_{i}) - F(\tilde y|\zv) \}}{\sum_{i=1}^{n} p_i(\zv)K_h(\tX_{i} - \zv)},
\end{split}
\end{align}
where $\delta_i = \1(\tYt \le \tilde y) - F(\tilde y|\tX_{i})$. Note that $\mathbb E[\delta_i] = 0$ due to the tower property.
Now, let
\[
    b_i(\zv) = b_i(\tX_{i}, \zv) \coloneqq \left[ 1 + h^w \cdot c(\zv) (\tilde X_{i1} - \tilde x_1) K_h(\tX_{i} - \zv) \right]^{-1}.
\]
Then, by Lemma \ref{lem:lambda_characterize}, we have that
\begin{equation}\label{eq:pt_and_bt}
    p_i(\zv) = n^{-1} b_i(\zv) (1 + o_p(1)).
\end{equation}
Define {the approximations for the terms in the decomposition presented in \eqref{eq:bias_variance_decomposition}:}
\begin{align*}
    &J_1 = n^{-1/2} h^{w/2} \sum_{i=1}^{n} b_i(\zv) \delta_i K_h(\tX_{i} - \zv),\\
    &J_2 = n^{-1} \sum_{i=1}^{n} \left\{ F(\tilde y|\tX_{i}) - F(\tilde y|\zv) \right\} b_i(\zv) K_h(\tX_{i} - \zv),\\
    &J_3 = n^{-1} \sum_{i=1}^{n} b_i(\zv) K_h(\tX_{i} - \zv),
\end{align*}
so that
\begin{equation}\label{eq:F_hat_decompose}
    \widehat{F}(\tilde y|\zv) - F(\tilde y|\zv) = \{ (n h^w)^{-1/2} J_1 + J_2 \} J_3^{-1} \{1 + o_p(1)\}.
\end{equation}
Therefore, we will derive Proposition \ref{Prop:RNW_consistency} by controlling the terms $J_1$, $J_2$ and $J_3$.

\begin{lem}\label{lem:J1}
    Under the assumptions of Proposition \ref{Prop:RNW_consistency},
    \begin{equation}\label{eq:J1}
        J_1 = \Oc_p(1).
    \end{equation}
\end{lem}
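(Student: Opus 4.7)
The plan is to establish $J_1 = \Oc_p(1)$ by showing $\E[J_1] = 0$ and $\Var(J_1) = \Oc(1)$, then invoking Chebyshev's inequality. First I would set $Z_{n,i} \coloneqq h^{w/2} b_i(\zv) \delta_i K_h(\tX_i - \zv)$ so that $J_1 = n^{-1/2} \sum_{i=1}^n Z_{n,i}$. Since $\E[\delta_i \mid \tX_i] = 0$ by the tower property (as $F(\tilde y \mid \tX_i)$ is by definition $\E[\1(\tY_i \le \tilde y) \mid \tX_i]$) and the factor $b_i(\zv) K_h(\tX_i - \zv)$ is $\tX_i$-measurable, we immediately get $\E[Z_{n,i}] = 0$ and hence $\E[J_1] = 0$.

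Next, I would decompose $\Var(J_1) = n^{-1} \sum_{i=1}^n \Var(Z_{n,i}) + n^{-1} \sum_{i \ne j} \Cov(Z_{n,i}, Z_{n,j})$. The diagonal term is routine: by stationarity it equals $\Var(Z_{n,1}) \le h^w \E[K_h^2(\tX_1 - \zv)](1 + o_p(1))$, using $|\delta_1| \le 1$ and $b_1 = 1 + o_p(1)$ (from Lemma \ref{lem:lambda_characterize} via \eqref{eq:pt_and_bt}). A standard change of variables $u = \zv + hr$ together with continuity of $g$ at $\zv$ (Assumption \ref{assumption:density_and_CDF}-\ref{subassumption:density-marginal}) and Assumption \ref{assumption:kernel}-\ref{subassumption:kernel-nu} gives $h^w \E[K_h^2(\tX_1 - \zv)] \to g(\zv) \nu_0$, so the diagonal contribution is $\Oc(1)$.

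The hard part will be the off-diagonal sum. By stationarity it reduces to bounding $\sum_{\tau=1}^{n-1} |\Cov(Z_{n,1}, Z_{n,\tau+1})|$, and the standard device is to split the sum at a threshold $d_n$ tuned to the mixing rate. For short lags $\tau \le d_n$, I would condition on $(\tX_1, \tX_{\tau+1})$, use $|\delta_1 \delta_{\tau+1}| \le 1$ together with the joint-density bound $g_{1,\tau+1}(u,v) \le g(u) g(v) + M$ from Assumption \ref{assumption:density_and_CDF}-\ref{subassumption:density-joint}, and change variables to obtain $|\Cov(Z_{n,1}, Z_{n,\tau+1})| = \Oc(h^w)$, contributing $\Oc(d_n h^w)$ in total. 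For long lags $\tau > d_n$, the almost-sure bound $|Z_{n,i}| \le C h^{-w/2}$ (since $K$ is bounded and compactly supported) combined with Davydov's covariance inequality and Assumption \ref{assumption:alpha_mixing} ($\alpha(\tau) = \Oc(\tau^{-(2+\delta)})$) yields $|\Cov(Z_{n,1}, Z_{n,\tau+1})| \le C h^{-w} \alpha(\tau)$, contributing $\Oc(h^{-w} d_n^{-(1+\delta)})$.

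Finally, I would balance the two bounds by choosing $d_n = h^{-2w/(2+\delta)}$, which produces $d_n h^w = h^{-w} d_n^{-(1+\delta)} = h^{w\delta/(2+\delta)} \to 0$ as $h \to 0$. Thus the off-diagonal sum is $o(1)$, giving $\Var(J_1) = \Oc(1)$, and Chebyshev's inequality then delivers $J_1 = \Oc_p(1)$. The technical subtleties to watch are the measurability of $Z_{n,i}$ with respect to $\sigma(V_i)$ (needed so that Davydov's inequality can be invoked against the mixing coefficient $\alpha(\tau)$ of Assumption \ref{assumption:alpha_mixing}) and the uniformity in $i$ of the approximation $b_i = 1 + o_p(1)$ as it is absorbed through the variance computation.
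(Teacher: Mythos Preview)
Your proposal is correct and follows essentially the same route as the paper: define $\xi_i = Z_{n,i}$, show $\E[\xi_i]=0$ by the tower property, bound $\Var(J_1)$ by splitting the covariance sum at $d_n = h^{-w/(1+\delta/2)} = h^{-2w/(2+\delta)}$, handle short lags via the joint-density bound (Assumption~\ref{assumption:density_and_CDF}-\ref{subassumption:density-joint}) and long lags via the mixing covariance inequality (the paper cites Ibragimov--Linnik, you cite Davydov; same bound for bounded variables), and conclude by Chebyshev. Your caution about measurability is unfounded here since $b_i(\zv)$ is defined through the deterministic constant $c(\zv)$ rather than the random $\lambda$, so $Z_{n,i}$ is indeed $\sigma(V_i)$-measurable.
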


\begin{proof}
    Let
    \[
        \xi_i = h^{w/2} b_i(\zv) \delta_i K_h(\tX_{i} - \zv),
    \]
    so that $J_1 = n^{-1/2} \sum_{i=1}^{n} \xi_i$.
    Since $\E(\delta_i | \tX_{i}) = 0$, we have that $\E(\xi_i) = \E(\E(\xi_i | \tX_{i})) = 0$, and thus
    \begin{align}
        \E J_1 = 0.
    \end{align}
    Also, due to the stationarity of $\tX_{i}$, we have that
    \begin{equation}\label{eq:var_J1}
        \Var(J_1) = \E \xi_i^2 + \sum_{i=2}^{n} \left( 1 - \frac{i-1}{n} \right) \Cov(\xi_1, \xi_i).
    \end{equation}
    By Assumption \ref{assumption:kernel}, we have that $\lim_{n \to \infty} b_i(\zv) = 1$, which gives $\E(b_i) = 1 + o_p(1)$. Therefore, through expansion, we have
    \begin{align*}
        \E \xi_i^2
        &= h^w \E\left[\E\left[b_i^2(\zv) K_h^2(\tX_{i} - \zv) \delta_i^2 \mid \tX_{i}\right]\right]\\
        &= h^w \E\left[b_i^2(\zv) K_h^2(\tX_{i} - \zv) F(\tilde y|\tX_{i})(1 - F(\tilde y|\tX_{i}))\right]\\
        &= \left[(K^2)_h \ast \{b_i^2(\cdot, \zv) F(\tilde y|\cdot) (1 - F(\tilde y|\cdot)) g(\cdot)\}\right] (\zv)\\
        &= \nu_0 F(\tilde y|\zv) (1 - F(\tilde y|\zv)) g(\zv) + o_p(1),
    \end{align*}
    where $\ast$ in the third line is the convolution operator.
    To control the second term in the right-hand side of \eqref{eq:var_J1}, we borrow the idea of \citet{masry1986recursive}. Choose $d_{n} = \Oc(h^{-\frac{w}{1 + \delta/2}})$ and decompose
    \begin{align*}
        &\sum_{i=2}^{n} \left( 1 - \frac{i-1}{n} \right) \Cov(\xi_1, \xi_i) = \sum_{i=2}^{d_{n}} \left( 1 - \frac{i-1}{n} \right) \Cov(\xi_1, \xi_i) + \sum_{i=d_{n}+1}^{n} \left( 1 - \frac{i-1}{n} \right) \Cov(\xi_1, \xi_i).
    \end{align*}
    We have that $\abs{b_i(\zv) \delta_i} \le C_2$ for some constant $C_2$. By Assumption \ref{assumption:density_and_CDF}-(iv), we obtain
    \begin{align*}
        \abs{\Cov(\xi_1, \xi_i)} &= \abs{\int_{\Real^{w}} \int_{\Real^{w}} \xi_1 \xi_i g_{1,i}(\uv, v) d\uv dv - \int_{\Real^{w}} \xi_1 g(\uv) d\uv \int_{\Real^{w}} \xi_i g(v) dv}\\
        &\le C_2^2 h^w \int_{\Real^{w}} \int_{\Real^{w}} K(\uv)K(v) \abs{g_{1, i}(\zv - h\uv, \zv - hv) - g(\zv -h\uv) g(\zv - hv)} d\uv dv\\ 
        &\le C_2^2 M h^w,
    \end{align*}
    so that
    \[
        \sum_{i=2}^{d_{n}} \left(1 - \frac{i-1}{n}\right) \Cov(\xi_1, \xi_i) = \Oc_p(d_{n} h^w) = o_p(1).
    \]
    By Assumption \ref{assumption:kernel}, we have $\Vert(\tX_{i} - \zv) K_h(\tX_{i} - \zv)\Vert \le C_3$, so that $\abs{\xi_i} \le C h^{-w/2}$. Then, by Theorem 17.2.1 of \citet{ibragimov1971independent}, we have that
    \[
        \abs{\Cov(\xi_1, \xi_i)} \le Ch^{-w} \alpha(i-1).
    \]
    Thus, we get
    \begin{align*}
        &\sum_{i=d_{n}+1}^{n} \left( 1 - \frac{i-1}{n} \right) \Cov(\xi_1, \xi_i) \le C h^{-w} \sum_{i \ge d_{n}} \alpha(i) \le C h^{-w} d_{n}^{-1-\delta} = o(1).
    \end{align*}
    Therefore, we obtain
    \begin{equation}\label{eq:J1_result}
        J_1 = \E J_1 + \Oc_p \left(\sqrt{\Var(J_1)} \right) = \Oc_p(1).
    \end{equation}
\end{proof}

\begin{lem}\label{lem:J2_and_J3}
    Under the assumptions of Proposition \ref{Prop:RNW_consistency},
    \begin{align}
        J_2 &= \frac{1}{2} h^2 \mu_2 \tr(D_{\zv}^2 F(\tilde y|\zv)) g(\zv) + o_p(h^2), \label{eq:J2}\\
        J_3 &= g(\zv) + o_p(1). \label{eq:J3}
    \end{align}
\end{lem}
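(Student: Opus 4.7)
The plan is to handle $J_2$ and $J_3$ by the same bias-variance decomposition that drove the analysis of $J_1$ in Lemma~\ref{lem:J1}. Both quantities are of the form $n^{-1}\sum_{i=1}^n \eta_i$ for a stationary, strongly mixing sequence $\eta_i$, so I would (i) compute $\E \eta_i$ to the leading order via the standard change of variables $u = (\tX_i - \zv)/h$ and a Taylor expansion, and (ii) show that the sampling fluctuation is $o_p(h^2)$ (for $J_2$) and $o_p(1)$ (for $J_3$) by splitting the covariance sum at a carefully chosen lag, exactly as in Lemma~\ref{lem:J1}.

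For the bias, note that Lemma~\ref{lem:lambda_characterize} gives
\[
 b_i(\zv) = 1 - h\,c(\zv)\, u_1 K(u) + \Oc(h^2),
\]
while Assumption~\ref{assumption:density_and_CDF} yields $g(\zv + hu) = g(\zv) + h u^\top \nabla g(\zv) + \tfrac{h^2}{2} u^\top D^2_{\zv} g(\zv) u + \Oc(h^3)$. Substituting into $\E J_3 = \int K(u)\, b(\zv + hu,\zv)\,g(\zv + hu)\,du$, the Assumption~\ref{assumption:kernel} symmetry identities kill the $\Oc(h)$ term and leave $\E J_3 = g(\zv) + \Oc(h^2)$. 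For $J_2$, I would additionally Taylor-expand
\[
 F(\tilde y|\zv + hu) - F(\tilde y|\zv) = h\, u^\top \nabla_{\zv} F(\tilde y|\zv) + \tfrac{h^2}{2}\, u^\top D^2_{\zv} F(\tilde y|\zv)\, u + \Oc(h^3),
\]
using Assumption~\ref{assumption:density_and_CDF}\ref{subassumption:cdf-smooth}. The $\Oc(h)$ piece integrates to zero by symmetry of $K$. The $\Oc(h^2)$ contributions are three: the curvature term $\tfrac{1}{2}h^2 \mu_2 \tr(D^2_{\zv}F(\tilde y|\zv))\,g(\zv)$ coming from $u^\top D^2_{\zv} F\, u$ against $\int u u^\top K(u)\,du = \mu_2 I$; an unwanted design-bias cross term proportional to $h^2 \mu_2(\nabla g)^\top (\nabla_{\zv} F)$ produced by pairing $h u^\top \nabla_{\zv} F$ with $h u^\top \nabla g$; and a third term produced by pairing $h u^\top \nabla_{\zv} F$ with the $-h c(\zv) u_1 K(u)$ correction in $b_i$, which evaluates to $-h^2 c(\zv) g(\zv)\nu_2 (\nabla_{\zv} F)_1$ by Assumption~\ref{assumption:kernel}\ref{subassumption:kernel-nu}. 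The precise formula $c(\zv) = (\partial g/\partial \tilde x_1)\mu_2/(g(\zv)\nu_2)$ from Lemma~\ref{lem:lambda_characterize} is designed exactly so that the second and third pieces cancel, delivering the claimed bias.

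For the variance, let $\eta_i = b_i(\zv) K_h(\tX_i - \zv)$ in the case of $J_3$ and $\eta_i = \{F(\tilde y|\tX_i) - F(\tilde y|\zv)\} b_i(\zv) K_h(\tX_i - \zv)$ in the case of $J_2$. Both are bounded by a constant multiple of $h^{-w}$ on the support of $K$, and one-step changes of variables give $\E \eta_1^2 = \Oc(h^{-w})$ for $J_3$ and $\E \eta_1^2 = \Oc(h^{-w+2})$ for $J_2$ (using $|F(\tilde y|\tX_i) - F(\tilde y|\zv)| = \Oc(h)$ on the compact support of $K_h$). Writing $\Var(n^{-1}\sum \eta_i) = n^{-1}\Var(\eta_1) + 2n^{-1}\sum_{i=2}^n(1 - (i-1)/n)\Cov(\eta_1,\eta_i)$ and splitting the covariance sum at $d_n = \Oc(h^{-w/(1+\delta/2)})$, the near block is controlled by Assumption~\ref{assumption:density_and_CDF}\ref{subassumption:density-joint} and contributes $\Oc(d_n h^w)$, while the far block uses Ibragimov's covariance inequality with $\alpha(\tau) = \Oc(\tau^{-2-\delta})$ and contributes $\Oc(h^{-w} d_n^{-1-\delta})$. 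Assumption~\ref{assumption:bandwidth} makes both $o(1)$, so $\Var(J_3) = o(1)$ and $\Var(J_2) = o(h^4)$, and combining with the bias expansions yields \eqref{eq:J3} and \eqref{eq:J2}.

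The hard part is the $J_2$ bias: the naive second-order expansion produces a stubborn design-bias cross term $h^2 \mu_2 (\nabla g)^\top(\nabla_{\zv} F)$ that is absent from the claimed answer. The resolution requires carrying the $\Oc(h)$ correction in $b_i$ (rather than approximating $b_i \approx 1$) and invoking the explicit formula for $c(\zv)$ from Lemma~\ref{lem:lambda_characterize}, which is precisely what makes the cross term cancel. This design-bias reduction is the whole point of the RNW adjustment and what distinguishes this proof from the ordinary Nadaraya–Watson calculation.
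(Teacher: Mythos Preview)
Your route differs from the paper's in the key step for $J_2$, and as written it has a gap. The paper does not compute $\E J_2$ by expanding $g$ and $b_i$; instead it Taylor-expands only $F(\tilde y|\tX_i)-F(\tilde y|\zv)$ inside the empirical sum and then invokes the sample moment constraint \eqref{eq:weight_constraint_3} (together with $p_i=n^{-1}b_i(1+o_p(1))$ from \eqref{eq:pt_and_bt}) to conclude that the entire first-order piece
\[
\nabla_{\zv}F(\tilde y|\zv)^\top\, n^{-1}\sum_{i=1}^n b_i(\zv)(\tX_i-\zv)K_h(\tX_i-\zv)
\]
is $o_p(h^2)$. Because the gradient term is eliminated directly at the level of the sum, no design-bias cross term ever appears and only the Hessian term survives; its expectation then yields the stated leading constant immediately.

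Your expectation calculation, by contrast, does produce the cross term $h^2\mu_2(\nabla g)^\top(\nabla_{\zv}F)$, and you claim the $b_i$ correction cancels it. But the scalar $c(\zv)=(\partial g/\partial\tilde x_1)\mu_2/(g(\zv)\nu_2)$ from Lemma~\ref{lem:lambda_characterize} involves only the first coordinate, so your third term $-h^2 c(\zv)g(\zv)\nu_2(\nabla_{\zv}F)_1=-h^2\mu_2(\partial_1 g)(\partial_1 F)$ cancels only the $j=1$ summand of $h^2\mu_2\sum_{j=1}^{w}(\partial_j g)(\partial_j F)$. For $w>1$ the residual $h^2\mu_2\sum_{j\ge 2}(\partial_j g)(\partial_j F)$ is of exact order $h^2$, not $o_p(h^2)$, so the claimed bias does not follow from your argument. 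The point is that \eqref{eq:weight_constraint_3} is a full $w$-vector constraint, and it is this vector constraint---applied to the empirical sum, not the one-dimensional $c(\zv)$ formula---that removes the design bias in the paper's proof.
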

\begin{proof}
    By Assumption 1, \eqref{eq:weight_constraint_2} and \eqref{eq:weight_constraint_3}, we have through expansion that
    \begin{align*}
        &J_2 = (2n)^{-1} \sum_{i=1}^{n} (\tX_{i} - \zv)^\top (D_{\zv}^2 F(\tilde y|\zv)) (\tX_{i} - \zv) b_i(\zv) K_h(\tX_{i} - \zv) + o_p(h^2).
    \end{align*}
    Since
    \begin{align*}
        &\E [ (\tX_{i} - \zv)^\top D_{\zv}^2 F(\tilde y|\zv) (\tX_{i} - \zv) b_i(\zv) K_h(\tX_{i} - \zv) ]\\
        &= \tr( D_{\zv}^2 F(\tilde y|\zv) \E [(\tX_{i} - \zv) (\tX_{i} - \zv)^\top b_i(\zv) K_h(\tX_{i} - \zv)] )\\
        &= h^2 \mu_2 g(\zv) \tr(D_{\zv}^2 F(\tilde y|\zv)) + o_p(h^2),
    \end{align*}
    we have
    \begin{equation}%
        J_2 = \frac{1}{2} h^2 \mu_2 \tr(D_{\zv}^2 F(\tilde y|\zv)) g(\zv) + o_p(h^2).
    \end{equation}

    Finally, by applying the expansion argument routinely, we get
    \begin{equation}%
        J_3 = g(\zv) + o_p(1).
    \end{equation}
\end{proof}

\begin{proof}[Proof of Proposition \ref{Prop:RNW_consistency} \citep{cai2002regression, salha2006kernel}]

    Combining Lemmas \ref{lem:J1} and \ref{lem:J2_and_J3} with \eqref{eq:F_hat_decompose}, Assumption \ref{assumption:bandwidth} gives the result.
\end{proof}

\begin{proof}[Proof of Corollary \ref{cor:QR_consistency} \citep{cai2002regression}]    
    Given $\zv$, Proposition \ref{Prop:RNW_consistency} implies uniform convergence of $\widehat{F}(\cdot|\zv)$ to $F(\cdot|\zv)$ in probability \citep[p.127-128]{tucker1967graduate} since $F(\cdot|\zv)$ is a CDF. That is,
    \[
        \sup_{\tilde y \in \Real} \abs{\widehat{F}(\tilde y|\zv) - F(\tilde y|\zv)} \to 0 \mb{ in probability}.
    \]
    Given $\eps > 0$, let $\delta = \delta(\eps) \coloneqq \min \{\beta - F(Q_\beta(\zv) - \eps | \zv), F(Q_\beta(\zv) + \eps | \zv) - \beta\}$. Note that $\delta > 0$ due to the uniqueness of the quantile. We have
    \begin{align*}
        \P \left (\abs{\widehat Q_\beta(\zv) - Q_\beta(\zv)} > \eps \right)
        &\leq \P \left( \abs{F(\widehat Q_\beta(\zv) | \zv) - \beta} > \delta \right)\\
        &\leq \P \left( \sup_{\tilde y \in \Real} \abs{\widehat{F}(\tilde y|\zv) - F(\tilde y|\zv)} > \delta \right).
    \end{align*}    
    The uniform convergence of $\widehat F(\cdot | \tilde x)$ in probability gives the result.
\end{proof}

\begin{proof}[Proof of Corollary \ref{cor:asymptotic_conditional_coverage}]
    From the definition of $\hat C_{t-1}^\alpha$ in \eqref{eq:PI_def}, we have
    \begin{align*}
        \P \left( Y_t \in \hat{C}_{t-1}^\alpha(X_t) \Big| X_t \right) &= \P \left( \tilde Y_{n+1} \in \left[ \widehat{Q}_{\beta^*}(\tilde X_{n+1}), \widehat{Q}_{1 - \alpha + \beta^*}(\tilde X_{n+1}) \right] \Big| \, \tilde X_{n+1} \right)\\
        &= F \left( \widehat{Q}_{1 - \alpha + \beta^*}(\tilde X_{n+1}) \Big| \tilde X_{n+1} \right) - F \left( \widehat{Q}_{\beta^*}(\tilde X_{n+1}) \Big| \tilde X_{n+1} \right).
    \end{align*}
    By Theorem \ref{cor:QR_consistency}, we have the consistency of $\widehat Q_\beta$ for all $\beta \in (0,1)$. On that, the continuous mapping theorem and Assumption \ref{assumption:density_and_CDF} gives
    \begin{align*}
        &F \left( \widehat{Q}_{1 - \alpha + \beta^*}(\tilde X_{n+1}) \Big| \tilde X_{n+1} \right) - F \left( \widehat{Q}_{\beta^*}(\tilde X_{n+1}) \Big| \tilde X_{n+1} \right)\\
        &\to F \left( Q_{1 - \alpha + \beta^*}(\tilde X_{n+1}) \Big| \tilde X_{n+1} \right) - F \left( Q_{\beta^*}(\tilde X_{n+1}) \Big| \tilde X_{n+1} \right)\\
        &= 1 - \alpha,
    \end{align*}
    where the convergence is in probability.
\end{proof}

\begin{proof}[Proof of Theorem \ref{thm:asymptotic_conditional_coverage}]
    From the definition of $\widehat C_{t-1}^\alpha$ in \eqref{eq:PI_def}, we have
    \begin{align*}
        &\abs{\P \left( Y_t \in \widehat C_{t-1}^\alpha(x) \Big| X_t = x \right) - (1 - \alpha)}\\
        &= \abs{\P \left( \tilde Y_{n+1} \in \left[ \widehat{Q}_{\beta^*}(\tilde x), \widehat{Q}_{1 - \alpha + \beta^*}(\tilde x) \right] \Big| \ \tilde X_{n+1} = \tilde x \right) - (1 - \alpha)}\\
        &= \abs{F \left( \widehat{Q}_{1 - \alpha + \beta^*}(\tilde x) \Big| \tilde X_{n+1} = \tilde x \right) - F \left( \widehat{Q}_{\beta^*}(\tilde x) \Big| \tilde X_{n+1} = \tilde x \right) - (1 - \alpha)}\\
        &\leq \abs{\widehat F \left( \widehat{Q}_{1 - \alpha + \beta^*}(\tilde x) \Big| \tilde X_{n+1} = \tilde x \right) - (1 - \alpha + \beta^*)} + \abs{\widehat F \left( \widehat{Q}_{\beta^*}(\tilde x) \Big| \tilde X_{n+1} = \tilde x \right) - \beta^*}\\
        &\quad + \abs{F \left( \widehat{Q}_{1 - \alpha + \beta^*}(\tilde x) \Big| \tilde X_{n+1} = \tilde x \right) - \widehat F \left( \widehat{Q}_{1 - \alpha + \beta^*}(\tilde x) \Big| \tilde X_{n+1} = \tilde x \right)}\\
        &\quad + \abs{F \left( \widehat{Q}_{\beta^*}(\tilde x) \Big| \tilde X_{n+1} = \tilde x \right) - \widehat F \left( \widehat{Q}_{\beta^*}(\tilde x) \Big| \tilde X_{n+1} = \tilde x \right)}\\
        &\le 2 \Delta(\tilde x) + \Oc_p((nh^w)^{-1/2} + h^2),
    \end{align*}
    where the last inequality comes from \eqref{eq:F_hat_decompose} and the definition of the discrete gap $\Delta$.
\end{proof}

\section{Additional real data experiments}\label{app:add_real_experiments}

\subsection{Wind/Solar data experiment results}\label{app:exp_details}

{
We provide a more detailed description of the results for the solar and wind datasets introduced in Section \ref{sec:experiments}. Figures \ref{fig:solar_results} and \ref{fig:wind_results} illustrate the rolling coverage and interval width results for the solar and wind datasets, respectively. As described in Section \ref{sec:experiments}, \kowcpi{} consistently achieves the narrowest intervals while maintaining valid coverage. For qualitative explanations, we also include Figure \ref{fig:solar_path} that demonstrates the performance of \kowcpi{}, SPCI, and ACI on the Solar dataset.}

\begin{figure}[h!]
    \centering
    \begin{minipage}{0.49\textwidth}
        \includegraphics[width=\linewidth]{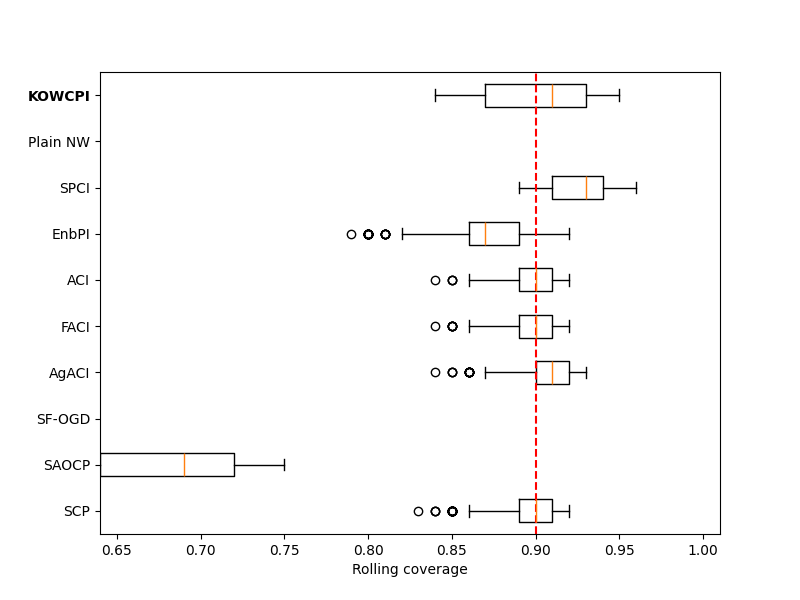}
        \subcaption{Rolling coverage (boxplot)}
    \end{minipage}
    \begin{minipage}{0.49\textwidth}
        \includegraphics[width=\linewidth]{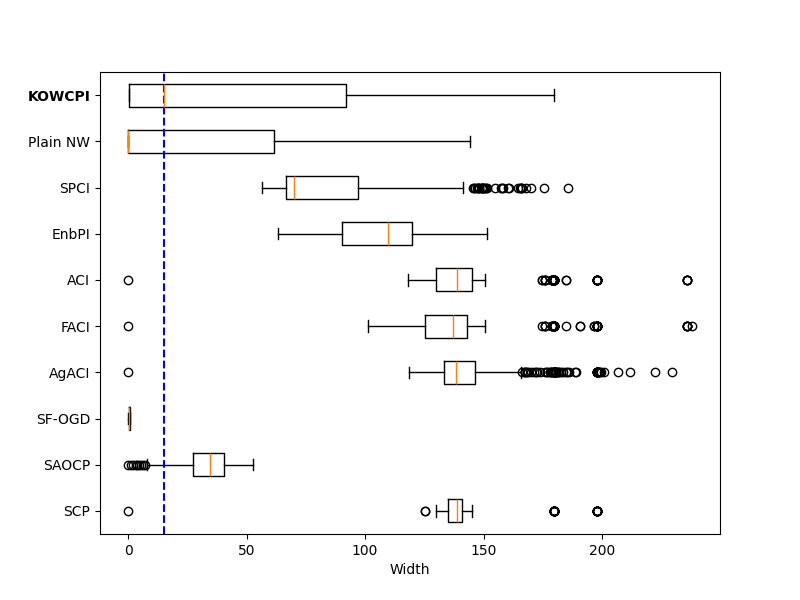}
        \subcaption{Width of prediction intervals (boxplot)}
    \end{minipage}
    \begin{minipage}{\textwidth}
        \includegraphics[width=\linewidth]{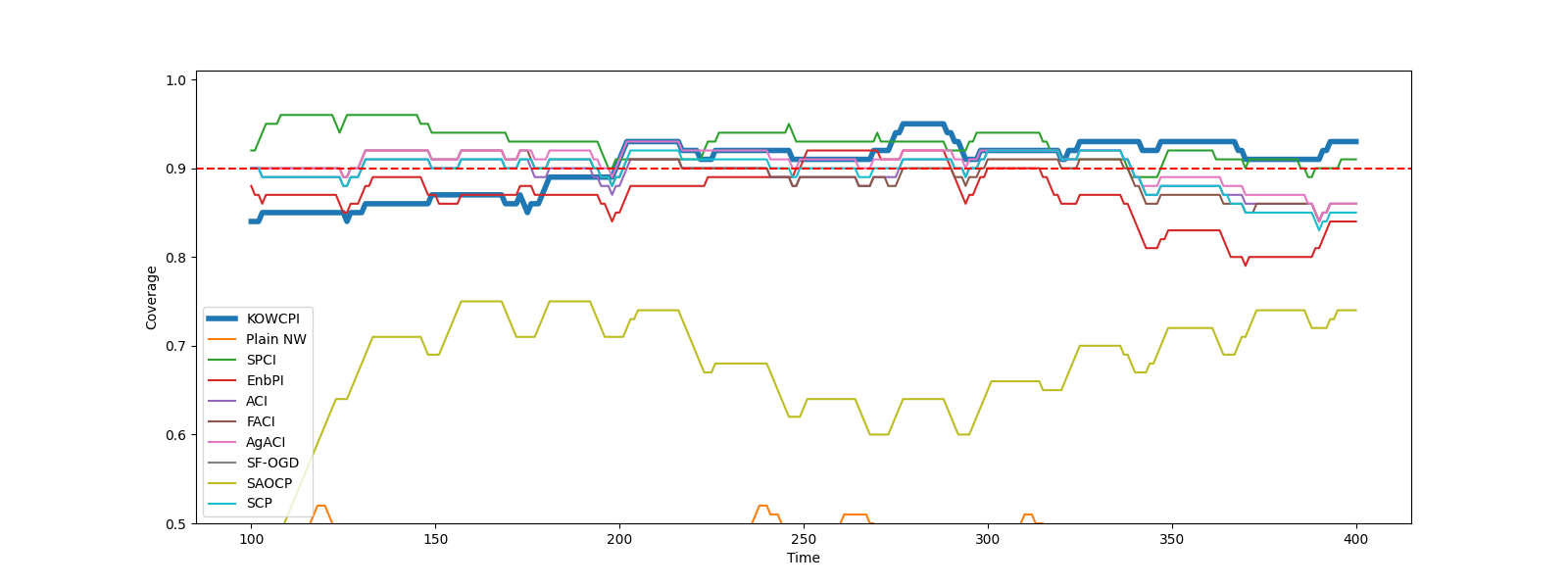}
        \subcaption{Rolling coverage over prediction time indices}
    \end{minipage}
    \caption{Rolling coverage and width comparison on the solar dataset by different methods.}
    \label{fig:solar_results}
\end{figure}

\begin{figure}[h!]
    \centering
    \begin{minipage}{0.49\textwidth}
        \includegraphics[width=\linewidth]{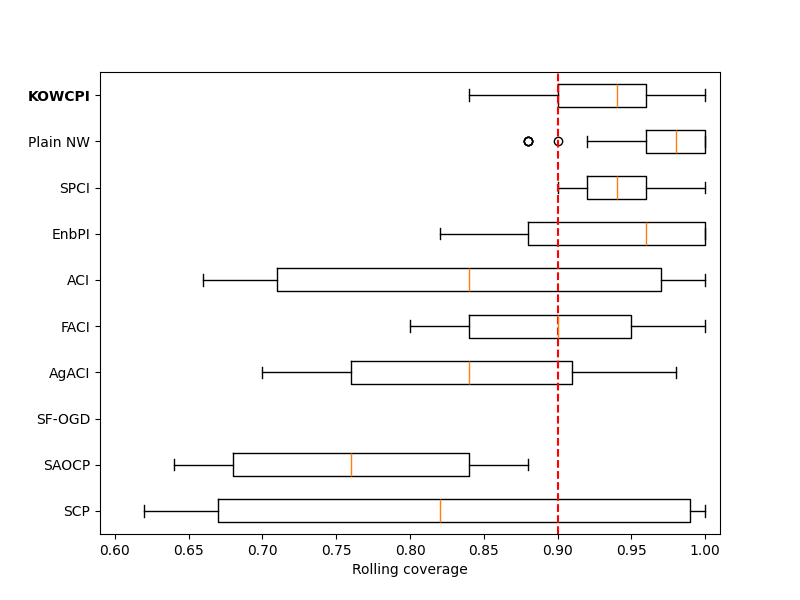}
        \subcaption{Rolling coverage (boxplot)}
    \end{minipage}
    \begin{minipage}{0.49\textwidth}
        \includegraphics[width=\linewidth]{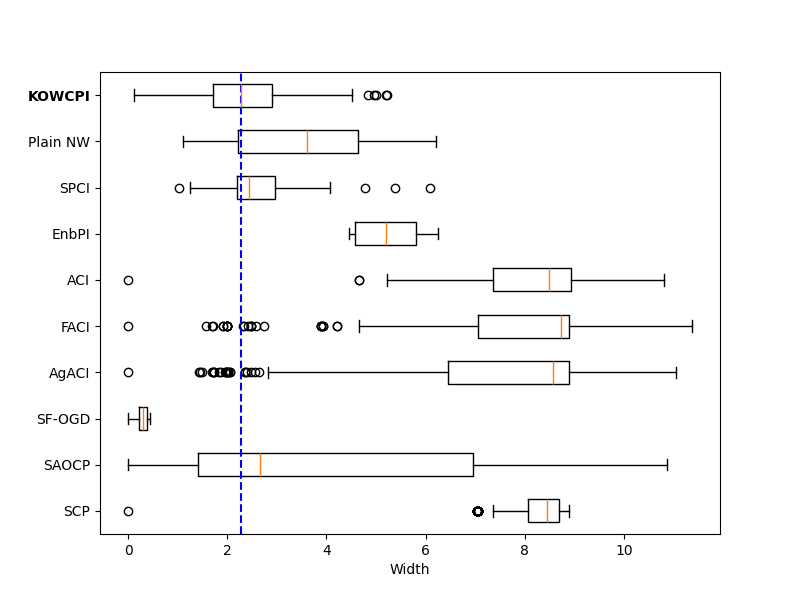}
        \subcaption{Width of prediction intervals (boxplot)}
    \end{minipage}
    \begin{minipage}{\textwidth}
        \includegraphics[width=\linewidth]{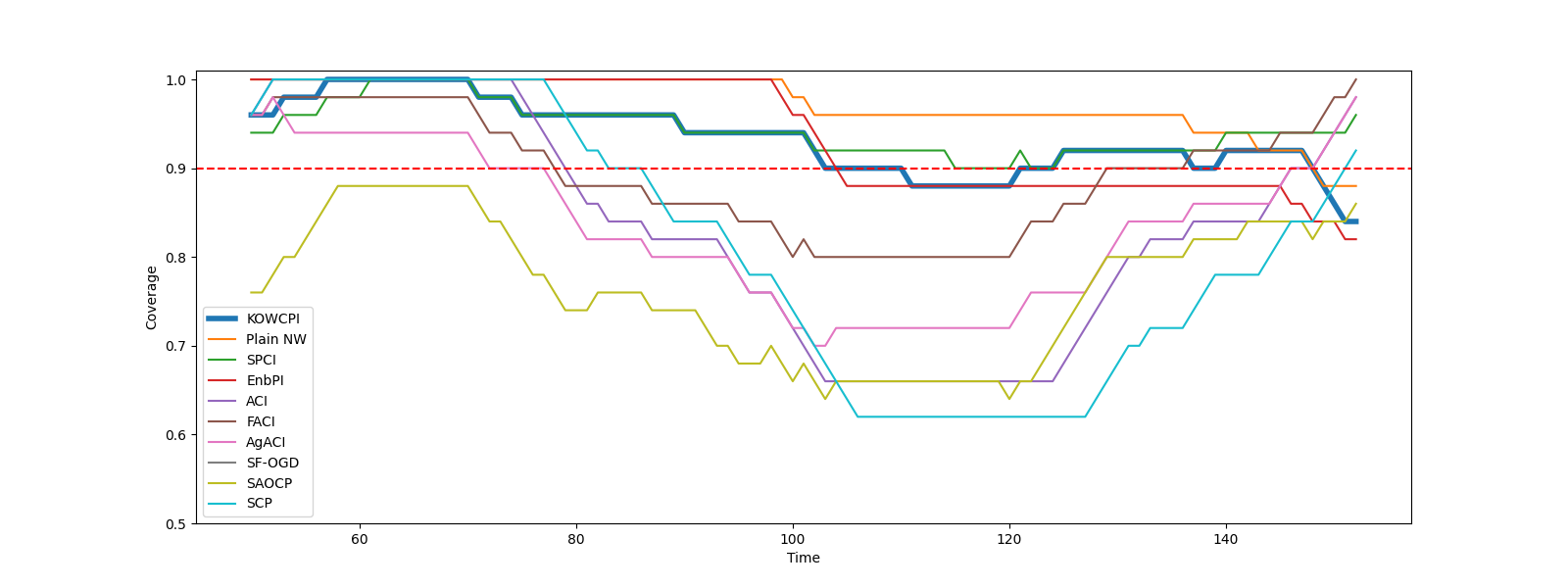}
        \subcaption{Rolling coverage over prediction time indices}
    \end{minipage}
    \caption{Rolling coverage and width comparison on the wind dataset by different methods.}
    \label{fig:wind_results}
\end{figure}

\begin{figure}[h!]
    \centering
    \begin{minipage}{0.7\textwidth}
        \includegraphics[width=\linewidth]{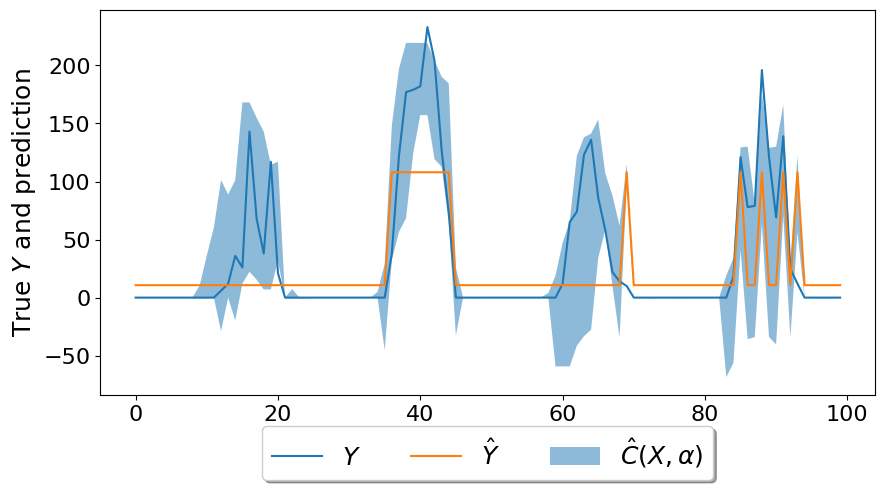}
        \subcaption{\protect \kowcpi{}}
    \end{minipage}
    \begin{minipage}{0.7\textwidth}
        \includegraphics[width=\linewidth]{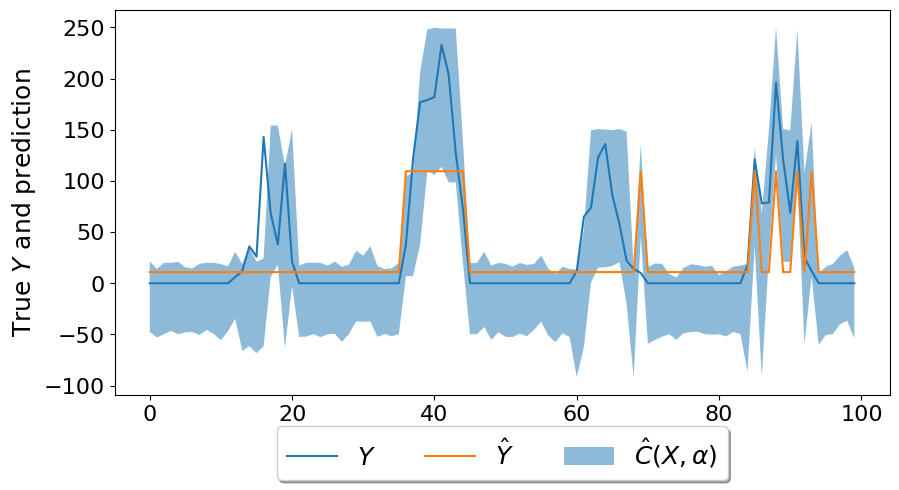}
        \subcaption{SPCI}
    \end{minipage}
    \begin{minipage}{0.7\textwidth}
        \includegraphics[width=\linewidth]{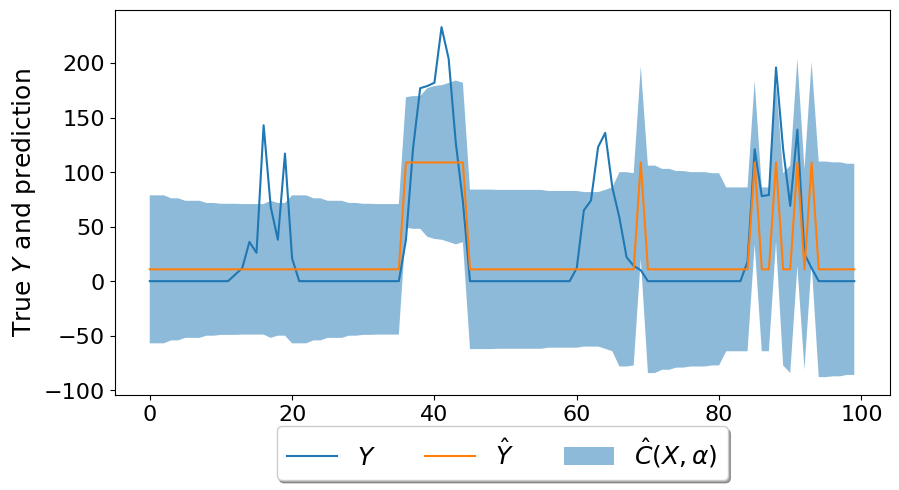}
        \subcaption{ACI}
    \end{minipage}
    \caption{{Comparison of prediction intervals generated by KOWCPI, SPCI, and ACI on the Solar dataset.}}
    \label{fig:solar_path}
\end{figure}

\subsection{{AAPL daily stock price}}\label{app:stock}

{
We compare \kowcpi{} with baseline methods using Apple’s daily closing stock price data from January 1, 2020, to December 12, 2022. This publicly available dataset can be accessed on Kaggle (\url{https://www.kaggle.com/datasets/paultimothymooney/stock-market-data}). The goal is to construct confidence intervals for the daily closing prices. The first 80\% of the data is used for training, with the remaining 20\% reserved for evaluation. We can observe from Table \ref{tab:AAPL} that \kowcpi{} attains the narrowest interval.}

\begin{table}[!h]
  \vspace{10pt}
  \centering
  \caption{{Empirical marginal coverage and interval widths for confidence intervals of AAPL closing prices, with a target coverage of 90\%. Standard deviations are calculated under three independent trials.\\}}
  \label{tab:AAPL}
  {%
\small
  \begin{tabular}{lcccccc}
    \toprule
             & Coverage & Width \\
    \midrule
    \kowcpi{}   & 0.912 (2.3e-3) & 15.74 (1.2e-1)\\
    SPCI        & 0.952 (3.3e-3)& 19.39 (2.3e-1)\\
    EnbPI       & 0.912 (8.7e-3)& 38.67 (1.5e-1)\\
    ACI         & 0.871 (1.7e-3)& 44.84 (8.7e-2)\\
    FACI        & 0.891 (5.2e-3)& 43.93 (1.7e-1)\\
    AgACI       & 0.878 (1.1e-2)& 43.19 (2.6e-1)\\
    SAOCP       & 0.619 (7.1e-4)& 17.90 (4.3e-2)\\
    SCP         & 0.796 (2.0e-3)& 36.60 (1.0e-1)\\
    \bottomrule
  \end{tabular}}
  \vspace{10pt}
\end{table}

\section{{Synthetic data analysis}}\label{app:syn_data}

\subsection{{Heteroskedastic mixture model}}

{To evaluate the robustness of \kowcpi{} under heteroskedastic conditions, we conduct simulations using a heteroskedastic mixture model. This model incorporates an AR(1) component, a GARCH(1,1) structure for time-varying variance, and an additional small Gaussian noise term. The model is defined as
\begin{align*}
    &Y_t = 0.8Y_{t-1} + \sigma_t \epsilon_t + \xi_t,\\
    &\sigma_t^2 = 0.1 + 0.3Y_{t-1}^2 + 0.6 \sigma_{t-1}^2,\\
    &\epsilon_t \stackrel{iid} \sim N(0,1), \quad \xi_t \stackrel{iid} \sim N(0, 0.1^2).
\end{align*}
This mixture model generates irregular, large volatility bursts, as evidenced in the simulated sample paths. Such extreme variations make conformal prediction challenging, as they require rapid adaptation to maintain valid coverage while avoiding overly wide intervals.}

{We simulate five independent paths of the model and evaluate the performance of \kowcpi{} against baseline methods, with a target coverage of 90\%. Unlike methods such as SPCI and SAOCP, which often overreact to volatility changes by producing excessively wide intervals and struggle to recover quickly after a burst, \kowcpi{} effectively adapts to these changes using its adaptive weighting mechanism. The results for each sample path are summarized in Table \ref{tab:hetero}.}

\begin{table}[!h]
  \vspace{10pt}
  \centering
  \caption{{Empirical marginal coverage and interval widths from simulations using a heteroskedastic mixture model, with a target coverage of 90\%. Here, C and W denote the empirical marginal coverage and average interval width, respectively.\\}}
  \label{tab:hetero}
  {%
\small
  \begin{tabular}{lcccccccccc}
    \toprule
    & \multicolumn{2}{c}{Path 1} & \multicolumn{2}{c}{Path 2} & \multicolumn{2}{c}{Path 3} & \multicolumn{2}{c}{Path 4} & \multicolumn{2}{c}{Path 5}\\
    \cmidrule(lr){2-3} \cmidrule(lr){4-5} \cmidrule(lr){6-7} \cmidrule(lr){8-9} \cmidrule(lr){10-11}
             & C        & W          & C        & W          & C        & W          & C        & W          & C        & W          \\
    \midrule
    \kowcpi{}   & 0.91 & 4.61 & 0.89 & 5.57 & 0.90 & 11.44 & 0.93 & 8.84e3 & 0.92 & 23.09\\
    SPCI        & 0.99 & 8.68 & 0.89 & 5.85 & 0.97 & 18.46 & 0.90 & 8.53e3 & 1.00 & 99.22\\
    EnbPI       & 0.93 & 5.47 & 0.87 & 5.56 & 0.91 & 15.60 & 0.91 & 9.14e3 & 0.96 & 74.62\\
    ACI         & 0.93 & 5.21 & 0.92 & 6.99 & 0.92 & 14.11 & 0.89 & 1.79e4 & 0.95 & 27.84\\
    FACI        & 0.92 & 5.11 & 0.92 & 7.10 & 0.92 & 13.75 & 0.89 & 1.88e4 & 0.92 & 24.28\\
    AgACI       & 0.93 & 5.20 & 0.91 & 6.83 & 0.93 & 13.17 & 0.88 & 1.58e4 & 0.92 & 25.83\\
    SAOCP       & 0.79 & 3.58 & 0.82 & 4.62 & 0.72 & 7.39 & 0 & 36.1 & 0.67 & 10.87\\
    SCP         & 0.93 & 5.17 & 0.91 & 6.73 & 0.93 & 14.19 & 0.84 & 1.06e4 & 0.97 & 29.23\\
    \bottomrule
  \end{tabular}}
  \vspace{10pt}
\end{table}

\subsection{{Nonstationary time series}}

{We also consider a model with strong seasonality, clearly representing non-stationarity:
\begin{align*}
    Y_t = \log(t') \sin \left(\frac{2 \pi t'}{12} \right) (|\beta^\top X_t| + |\beta^\top X_t|^2 + |\beta^\top X_t|^3)^{1/4} + \epsilon_t,
\end{align*}
where  $t' = \mathrm{mod}(t, 12)$ introduces a seasonal component with a 12-period cycle, and  $X_t = [Y_{t-100}, \ldots, Y_{t-1}]^\top$  represents features of lagged values. The noise term $\epsilon_t$ follows an AR(1) process, given by  $\epsilon_t = 0.6 \epsilon_{t-1} + \xi_t$ , with  $\xi_t \sim N(0,1)$.
Table \ref{tab:nonstat} demonstrates that \kowcpi{} performs best in general among methods that achieve valid coverage of 90\%. As \kowcpi{} actively leverages reweighting to assign higher weights to recent residuals, it demonstrates the ability to quickly adapt to changes in volatility.} 

\begin{table}[!h]
  \vspace{10pt}
  \centering
  \caption{{Empirical marginal coverage and interval widths from nonstationary time-series simulations, with a target coverage of 90\%. Standard deviations are calculated under five independent trials.\\}}
  \label{tab:nonstat}
  {%
\small
  \begin{tabular}{lcccccc}
    \toprule
             & Coverage & Width \\
    \midrule
    \kowcpi{}   & 0.90 (1.2e-3)& 11.41 (2.3e-2)\\
    SPCI        & 0.91 (2.7e-3)& 11.73 (3.1e-2)\\
    EnbPI       & 0.86 (1.2e-2)& 10.45 (1.8e-2)\\
    ACI         & 0.90 (1.1e-3)& 12.57 (8.7e-3)\\
    FACI        & 0.90 (4.1e-3)& 12.65 (1.2e-2)\\
    AgACI       & 0.90 (2.2e-3)& 12.71 (1.4e-2)\\
    SAOCP       & 0.82 (9.4e-4)& 8.89 (3.2e-3)\\
    SCP         & 0.90 (2.8e-3)& 12.50 (4.1e-2)\\
    \bottomrule
  \end{tabular}}
  \vspace{10pt}
\end{table}

\section{{Adaptive window length selection}}\label{app:adaptive}
{In this section, we explore an adaptive selection of $w$, where $w$ is no longer treated as a fixed hyperparameter but is instead dynamically adjusted for each time step. In \kowcpi{}, the window length $w$ originally serves as a hyperparameter that requires tuning. To alleviate the burden of manual tuning and introduce a more data-driven approach, we implement an adaptive selection process for $w$ based on a two-sample test on the residual distributions.}

{At each time step $t$, we compare the distributions two blocks of residuals using the two-sample Kolmogorov-Smirnov test: One block contains the most recent $w$ residuals $(\hat \eps_{t-1}, \ldots, \hat \eps_{t-w} )$, and another block consists of the $w$ residuals immediately preceding, $(\hat \eps_{t-w-1}, \ldots, \hat \eps_{t-2w})$. We then select the smallest $w$ for which the p-value drops below, e.g., 0.01.}

{While this is a simple preliminary approach, it allows for a data-driven and adaptive selection of $w$ without requiring additional hyperparameter tuning. Through experiments on the real data, we have confirmed that this method achieves comparable performance to $w$  values pre-selected by cross-validation (See Table \ref{tab:adaptive}). Figure \ref{fig:adaptive} illustrates how the chosen window size changes over time on the Wind dataset.}

\begin{table}[!t]
  \vspace{10pt}
  \centering
  \caption{{Comparison of \protect \kowcpi{} on real datasets using pre-fixed window lengths selected by cross-validation versus adaptive window selection based on the two-sample KS test. Target coverage is 90\%, and standard deviation is derived across five independent trials.\\}}
  \label{tab:adaptive}
  {%
\small
  \begin{tabular}{lcccccc}
    \toprule
    & \multicolumn{2}{c}{Electric} & \multicolumn{2}{c}{Wind} & \multicolumn{2}{c}{Solar}\\
    \cmidrule(lr){2-3} \cmidrule(lr){4-5} \cmidrule(lr){6-7}
             & Coverage        & Width          & Coverage        & Width          & Coverage        & Width          \\
    \midrule
    Fixed $w$   & 0.90 (2.3e-3) & 0.23 (1.5e-3) & 0.91 (2.8e-3) & 2.41 (3.2e-2) & 0.90 (1.2e-3) & 48.8 (9.4e-1)\\
    Adaptive $w$ & 0.92 (3.0e-3) & 0.22 (1.3e-3) & 0.90 (4.4e-3) & 2.44 (2.7e-2) & 0.90 (1.3e-3) & 50.6 (1.1e+0) \\
    \bottomrule
  \end{tabular}}
  \vspace{10pt}
\end{table}

\begin{figure}[!t]
\begin{center}
\includegraphics[width=.7\linewidth]{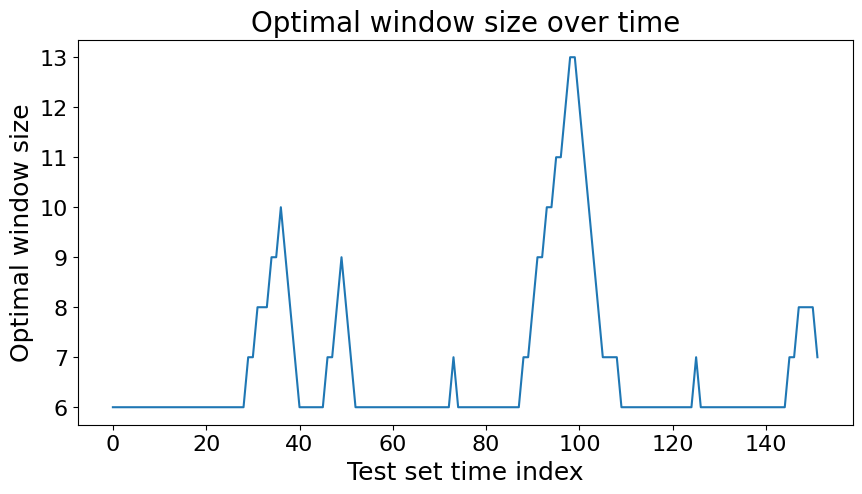}
\caption{{Dynamic adjustment of the window size ($w$) for each prediction step on the Wind dataset, using the adaptive selection process based on the two-sample KS test.}}
\label{fig:adaptive}
\end{center}
\end{figure}

\end{document}